\documentclass[letterpaper]{article} 
\usepackage{aaai2026}  
\usepackage{times}  
\usepackage{helvet}  
\usepackage{courier}  
\usepackage[hyphens]{url}  
\usepackage{graphicx} 
\urlstyle{rm} 
\usepackage{natbib}  
\usepackage{caption} 
\newcommand{\samethanks}[1][\value{footnote}]{\footnotemark[#1]}
\frenchspacing  
\setlength{\pdfpagewidth}{8.5in}  
\setlength{\pdfpageheight}{11in}  
%

\usepackage{multirow}
\usepackage[ruled,linesnumbered]{algorithm2e}
\usepackage{algpseudocode}
\usepackage{subfigure}
\usepackage{xcolor}
\usepackage[toc,title]{appendix}
\usepackage{enumitem}
\usepackage{float}
\usepackage{amsmath}
\usepackage{amsthm}
\usepackage{booktabs}
\usepackage{amssymb}
\usepackage{adjustbox}
\usepackage{makecell}
\usepackage{hyperref}

\newtheorem{definition}{Definition}
\newtheorem{theorem}{\bf Theorem}
\newtheorem{lemma}{Lemma}

\usepackage{annotate-equations}
\usepackage{booktabs}       
\usepackage{caption}        
\usepackage{amsfonts}       
\usepackage{nicefrac}       
\usepackage{microtype}      
\usepackage{enumitem}
\usepackage{bm}
\usepackage{amssymb}
\usepackage{mathtools}
\usepackage{subfigure} 
\usepackage{extarrows}
\usepackage{colortbl}
\usepackage{tikz}

\newcommand{\ie}{\textit{i.e.}}
\newcommand{\eg}{\textit{e.g.}}

\newcommand{\ourmethod}{BaCa\xspace}

\usepackage{pifont}
\newcommand{\cmark}{\color{green}\ding{51}}%
\newcommand{\xmark}{\color{red}\ding{55}}
\definecolor{lightlightgray}{gray}{0.91}
\definecolor{myblue}{RGB}{225,244,252}
\definecolor{lightgreen}{RGB}{66,210,79}
\definecolor{mygray}{RGB}{230,230,230}

%
\usepackage{newfloat}
\usepackage{listings}
\DeclareCaptionStyle{ruled}{labelfont=normalfont,labelsep=colon,strut=off} 
\lstset{%
	basicstyle={\footnotesize\ttfamily},
	numbers=left,numberstyle=\footnotesize,xleftmargin=2em,
	aboveskip=0pt,belowskip=0pt,%
	showstringspaces=false,tabsize=2,breaklines=true}
\floatstyle{ruled}
\newfloat{listing}{tb}{lst}{}
\floatname{listing}{Listing}
%
\pdfinfo{
/TemplateVersion (2026.1)
}

\setcounter{secnumdepth}{2} 

\title{Graph Out-of-Distribution Detection via Test-Time Calibration \\with Dual Dynamic Dictionaries}

\author{
    {Yue Hou\textsuperscript{\rm 1,\rm 2}\thanks{Equal contribution.}, Ruomei Liu\textsuperscript{\rm 1}\samethanks[1], Yingke Su\textsuperscript{\rm 2}, Junran Wu\textsuperscript{\rm 1}\thanks{Corresponding authors}, Ke Xu\textsuperscript{\rm 1,\rm 3}}
}
\affiliations{
    \textsuperscript{\rm 1}State Key Laboratory of Complex \& Critical Software Environment, Beihang University, Beijing, China\\
    \textsuperscript{\rm 2}Shen Yuan Honors College, Beihang University, Beijing, China\\
    \textsuperscript{\rm 3}Key Laboratory of Education Blockchain and Intelligent Technology,
Ministry of Education, Guangxi Normal University, China\\
    \{hou\_yue, rmliu, suyingke, wu\_junran, kexu\}@buaa.edu.cn
}

\begin{document}

\maketitle

\begin{abstract}

A key challenge in graph out-of-distribution (OOD) detection lies in the absence of ground-truth OOD samples during training. Existing methods are typically optimized to capture features within the in-distribution (ID) data and calculate OOD scores, which often limits pre-trained models from representing distributional boundaries, leading to unreliable OOD detection. Moreover, the latent structure of graph data is often governed by multiple underlying factors, which remains less explored. To address these challenges, we propose a novel test-time graph OOD detection method, termed \textbf{BaCa}, that calibrates OOD scores using dual dynamically updated dictionaries without requiring fine-tuning the pre-trained model. Specifically, BaCa estimates graphons and applies a mix-up strategy solely with test samples to generate diverse boundary-aware discriminative topologies, eliminating the need for exposing auxiliary datasets as outliers. We construct dual dynamic dictionaries via priority queues and attention mechanisms to adaptively capture latent ID and OOD representations, which are then utilized for boundary-aware OOD score calibration. To the best of our knowledge, extensive experiments on real-world datasets show that BaCa significantly outperforms existing state-of-the-art methods in OOD detection.



\end{abstract}

\section{Introduction}
With remarkable success across various domains, deep learning models are widely known to make overconfident predictions on inputs that differ from the training distribution. This often leads to misclassifying out-of-distribution (OOD) samples as in-distribution (ID) classes. 

OOD detection~\cite{usood_rc_schreyer2017detection,nlpood_zhou2021contrastive} aims to identify anomalous inputs and is essential for the safe deployment of models in open-world settings. However, performing OOD detection on graph-structured data is particularly challenging due to the non-Euclidean geometry and complex topology. 

Recent efforts~\cite{guo2023data,liu2023good,hou2025redundancyaware} in graph OOD detection fall into two main categories: \textbf{(1)} \textbf{End-to-end methods} that optimize an OOD-specific graph neural network (GNN)~\cite{gcn_kipf2017semi,gin_xu2019how} from scratch using only unlabeled ID data, and \textbf{(2)} \textbf{Post-hoc approaches}~\cite{guo2023data,wang2024goodat} that apply fine-tuned detectors on well-trained GNNs. These methods typically define an OOD score function based on the model's output logits or latent features. 
A notable extension of end-to-end training includes Outlier Exposure (OE)~\cite{junwei2024hgoe}, which leverages auxiliary OOD data during training to encourage the model to output flattened distributions for anomalous inputs. However, OE-based methods assume access to external OOD datasets, which violates the standard assumption of training solely on ID data.
Additionally, GOODAT~\cite{wang2024goodat} introduces a more practical test-time setting by directly modifying test samples without altering the pre-trained model. However, it still requires optimizing a learnable graph masker during inference, which may limit stability in real-time applications.

Despite these advancements, several notorious challenges remain underexplored. Pretrained GNNs, optimized solely on ID data, often struggle to distinguish OOD samples when their representations lie close to the ID manifold, such as when sharing similar topological structures. Moreover, the diversity of latent structural factors makes it difficult for such models to generalize well to unseen data. This limitation manifests in the form of overlapping score distributions between ID and OOD samples ($\rhd$ 
 Figure~\ref{fig:intro-1}), particularly near the decision boundary.
We argue that \textit{the key to effective test-time OOD detection lies in modeling the distributional boundary between ID and OOD samples}, especially in identifying those ambiguous cases at the boundary.

\begin{figure*}[t]
\centering
\hspace{-0.18cm}
 \subfigure[]{\label{fig:intro-1}
   \includegraphics[width=0.24\linewidth]{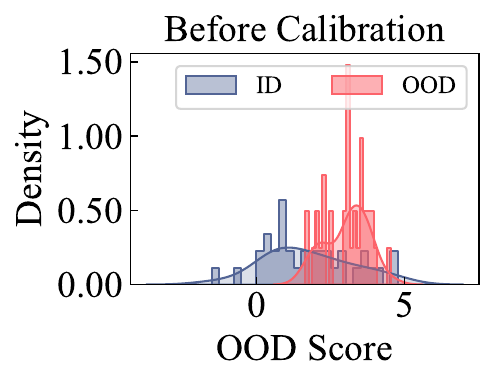}} 
\hspace{-0.18cm}
 \subfigure[]{\label{fig:intro-2}
   \includegraphics[width=0.24\linewidth]{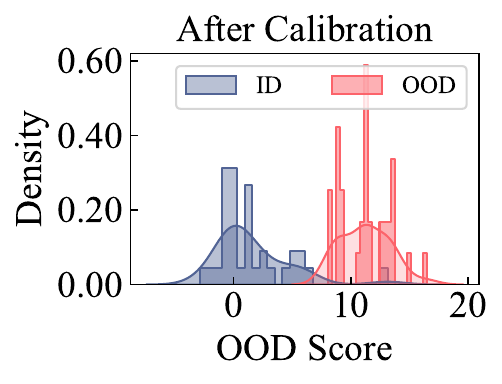}}
\hspace{-0.18cm}
 \subfigure[]{\label{fig:intro-3}
   \includegraphics[width=0.24\linewidth]{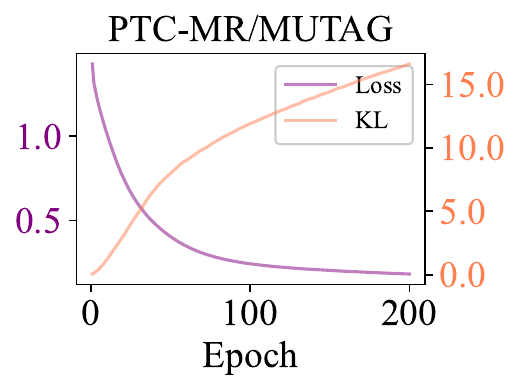}}
\hspace{-0.18cm}
 \subfigure[]{\label{fig:intro-4}
   \includegraphics[width=0.24\linewidth]{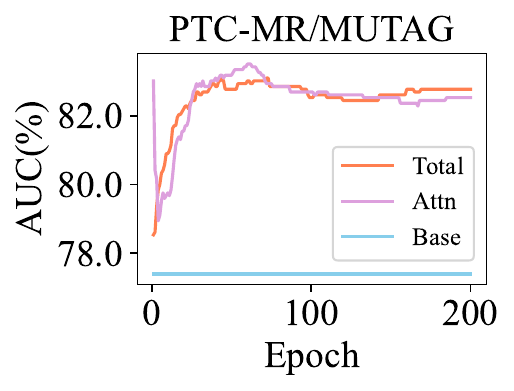}} 
\vspace{-2.5mm}
\caption{
An example of OOD score distribution and detection performance evolution over test-time iterations on the PTC/MUTAG dataset pair. 
\textbf{(a)} Before calibration, we dynamically feed the lower left tail of the OOD score distribution into the OOD dictionary and the higher right tail of the ID score distribution into the ID dictionary via two priority queues. 
\textbf{(b)} After calibration, the overlap between the ID and OOD score distributions is significantly reduced. 
\textbf{(c)} KL divergence and the loss of attention-based trainable parameters during the first 200 iterations. 
\textbf{(d)} AUC of test-time OOD detection performance over the first 200 iterations, where \textit{Total}, \textit{Attn}, and \textit{Base} denote our full method with  $S_\text{BaCa}$, attention-based calibration with $S_\text{Attn}$, and the pre-trained baseline with $S_{\text{Pre}}$, respectively.
}
\vspace{-1em}
\label{fig:intro}
\end{figure*}

Intuitively, if a test sample is more OOD-like than the least OOD sample near the ID boundary, it should be classified as OOD; similarly, if it is more ID-like than the least ID-like OOD sample, it should be treated as ID. 
Therefore, {\textbf{a natural solution}} is to calibrate OOD scores such that the overlap between ID and OOD samples is reduced ($\rhd$ Figure~\ref{fig:intro-2}), enhancing their separability at the distributional boundary.
Thus, this problem is highly challenging in:
\begin{itemize}[leftmargin=1.5em]
\item How to model the distributional boundary without relying on training ID or auxiliary OOD data?
\item How to enlarge the gap between ID and OOD data distributions through OOD score calibration?
\end{itemize}

To address these challenges, we propose a novel framework, \underline{\textbf{B}}oundary-\underline{\textbf{a}}ware \underline{\textbf{Ca}}libration for test-time graph OOD detection, termed \textbf{\ourmethod}. 
Our \ourmethod solves the aforementioned challenges and achieves adaptive OOD score calibration target through the following design.
{\textbf{Firstly}}, to model ID and OOD distributional boundaries, we perform partitioning based on initial judgment from the pre-trained model, and estimate graphons separately for ID and OOD subgroups. To capture diverse latent topological factors, we apply a graphon mixup strategy to generate synthetic samples that enhance the expressiveness of discriminative typologies and improve robustness, particularly in the early stages of detection.
{\textbf{Then}}, we propose the adaptive score calibration for the separation between ID and OOD distributions. Specifically, \ourmethod continuously collects synthetic latent representations during test time, especially those near the decision boundary, such as ID samples with OOD-like characteristics and vice versa, and dynamically inserts them into ID and OOD dictionaries maintained as priority queues. 
By incorporating a learnable attention mechanism, we adaptively calibrate OOD scores in a boundary-aware manner, reducing distributional overlap and ambiguity.
We utilize KL divergence to measure the distributional difference of OOD scores between ID and OOD samples. As iteration progresses (shown in Figure~\ref{fig:intro-3}), the KL divergence gradually increases, and the calibrated AUC consistently improves over the pre-trained encoder (see Figure~\ref{fig:intro-4}).
Extensive experiments on real-world graph datasets demonstrate the superiority of \ourmethod over state-of-the-art (SOTA) baselines. 
Notably, under the same test-time setting, \ourmethod outperforms GOODAT~\cite{wang2024goodat} on all 10 datasets, with an average AUC improvement of 8.37\%, especially on ClinTox/LIPO with gains up to 20.11\%.
Contributions of this paper are as follows:

\begin{itemize}[leftmargin=1.5em]
  \item We propose \ourmethod, a novel boundary-aware OOD score calibration framework for test-time graph OOD detection. Unlike previous approaches, it does not require prior outlier samples from auxiliary data or pre-trained model fine-tuning. 
  \item We generate diverse samples with discriminative typology and develop dual dynamic dictionaries maintained as priority queues, enabling adaptive OOD score calibration.
  \item Extensive experiments validate the effectiveness of \ourmethod, demonstrating the superior performance over SOTA baselines in unsupervised OOD detection.
\end{itemize}

\section{Notations and Preliminaries}
Before formulating the research problem, we first provide some necessary notations. Let $G=(\mathcal{V},\mathcal{E},\mathbf{X})$ represent a graph, where $\mathcal{V}$ is the set of nodes and $\mathcal{E}$ is the set of edges. The node features are represented by the feature matrix $\mathbf{X} \in \mathbb{R}^{n \times d}$, where $n=|\mathcal{V}|$ is the number of nodes and $d$ is the feature dimension. The structure information can also be described by an adjacency matrix $\mathbf{A} \in \mathbb{R}^{n \times n}$, so a graph can be alternatively represented by $G=(\mathbf{A},\mathbf{X})$. 
We summarize the frequently used notations in Appendix A.

\noindent \textbf{Test-time Graph-level OOD Detection.}
For graph-level OOD detection at test-time,
following GOODAT~\cite{wang2024goodat}, we consider an unlabeled ID dataset $\mathcal{D}^{id}=\{G_1^{id}, \cdots, G_{N_1}^{id}\}$ where graphs are sampled from distribution $\mathbb{P}^{id}$ and an OOD dataset $\mathcal{D}^{ood}=\{G_1^{ood}, \cdots, G_{N_2}^{ood}\}$ sampled from a different distribution $\mathbb{P}^{ood}$. 
Given a test sample $G$ from $\mathcal{D}^{id}_{test} \cup \mathcal{D}^{ood}_{test}$, test-time graph OOD detection aims to detect whether $G$ originates from $\mathbb{P}^{id}$ or $\mathbb{P}^{ood}$ utilizing a GNN encoder $f$ pre-trained on ID graphs $\mathcal{D}^{id}_{train} \subset \mathcal{D}^{id}$.
Specifically, the objective is to learn an OOD detector $D(\cdot,\cdot)$ that assigns an OOD detection score $S = D(f,G)$, with a higher $S$ indicating a greater probability that $G$ is from $\mathbb{P}^{ood}$ (note that $\mathcal{D}^{id}_{test} \cap \mathcal{D}^{id}_{train} = \emptyset$, $\mathcal{D}^{id}_{test} \subset \mathcal{D}^{id}$, and $\mathcal{D}^{ood}_{test} \subset \mathcal{D}^{ood}$).
It should be emphasized that graph data sourced from $\mathbb{P}^{in}$ and $\mathbb{P}^{out}$ might fall into multiple categories. However, in the unsupervised graph-level OOD task, the model is not provided with any category-specific labels.

\noindent \textbf{Graphon.} 
A graphon is a symmetric, bounded, and measurable function widely used to model the generative process of graphs~\cite{airoldi2013stochastic,lovasz2012large}. It serves as a limit object for sequences of dense graphs and captures the probability of edge existence between latent node representations in a continuous domain.
Formally, a graphon is defined as a two-dimensional symmetric Lebesgue measurable function $W: \Omega^2 \rightarrow [0, 1]$, where $\Omega$ is a probability space, typically taken as the unit interval $[0,1]$. The value $W(x,y)$ indicates the probability of an edge between two nodes associated with latent positions $x$ and $y$ in $\Omega$.
Graphons provide a principled framework for capturing the structural characteristics of graphs beyond discrete representations. By sampling latent variables from $\Omega$ and forming edges according to $W(x, y)$, one can generate synthetic graphs that share topological properties with the original graph distribution.

In real-world applications, the closed-form expression of the underlying graphon is generally unavailable and must be approximated from observed graphs. A common estimation approach is to approximate the graphon using a step function, which can be represented as a matrix $W \in [0,1]^{N \times N}$, where $N$ corresponds to the number of aligned latent positions or nodes. This matrix-form approximation enables efficient sampling of synthetic graphs and supports downstream tasks such as generation, augmentation, and structure comparison.
In this work, we adopt the USVT estimator~\cite{chatterjee2015matrix} due to its theoretical guarantees and empirical effectiveness.

\begin{figure*}[t]
    \centering
    \includegraphics[width=1\linewidth]{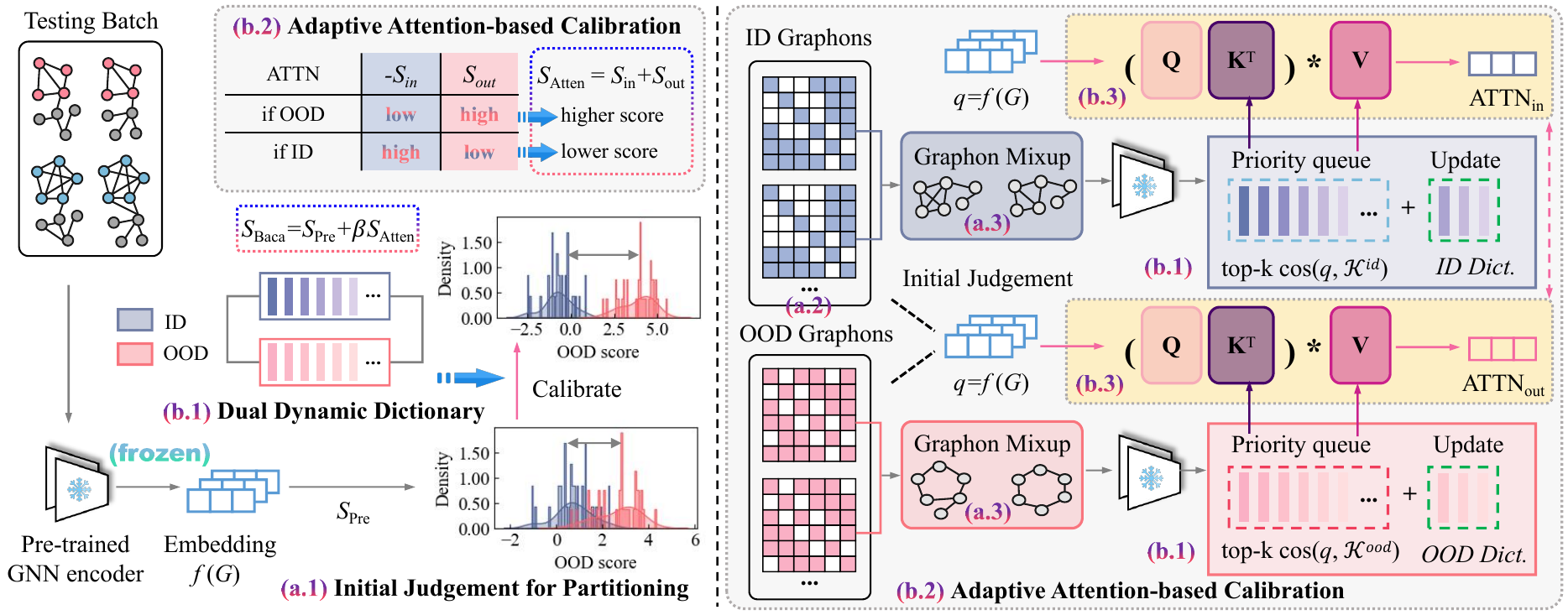}
    \hspace{20pt}
    \vspace{-0.3cm}
\caption{Overview of our proposed \ourmethod~framework. 
\textbf{(a.1)} Given a pre-trained GNN encoder and test samples, we first compute the initial OOD scores and partition the samples into two preliminary subgroups based on the pre-trained model's predictions. 
\textbf{(a.2–a.3)} Within each subgroup, diverse discriminative typologies are generated via graphon mixup and stored in dual dynamic dictionaries maintained as priority queues. 
\textbf{(b.1–b.2)} The priority queue–based dictionaries are used to support adaptive, attention-based score calibration. 
\textbf{(b.3)} The adaptive attention module is optimized during inference to compute the final calibrated OOD score.
}
\label{fig:framework}
\vspace{-0.3cm}
\end{figure*}

\section{Methodology}
In this section, we elaborate on the proposed adaptive redundancy-aware OOD score calibration for test-time graph OOD detection, termed \textbf{\ourmethod}.

\subsection{Overall Framework}
In general, the basic objective in OOD detection for obtaining a GNN encoder $f$ is defined as:
\begin{equation}
\min_f \mathbb{E}_{G \sim \mathcal{D}^{\text{in}}_{\text{train}}} \mathcal{L}_{\text{Pre}}(f;G),
\label{eq:pre}
\end{equation}
where $\mathcal{L}_{\text{Pre}}$ denotes the pretraining loss function. For end-to-end OOD detection methods~\cite{liu2023good}, the OOD score of a test sample is typically derived directly from the output of this pre-trained model. However, the initial judgment made by the pre-trained model regarding a sample's distribution may be unreliable, due to its lack of exposure to true OOD samples. This can lead to inaccurate OOD scores, especially near the boundary between ID and OOD distributions.

To enable test-time OOD score calibration without updating the pre-trained model, we identify two key challenges: \textbf{(C1)} how to effectively model the boundary between ID and OOD samples, and \textbf{(C2)} how to design a robust score calibration mechanism.
To address \textbf{(C1)}, we first partition test samples into two groups based on the initial score estimation, and then estimate graphons separately for each group. A graphon mixup strategy is applied within each group to generate diverse discriminative typologies that enhance the representation of boundary distributions.
To address \textbf{(C2)}, we maintain dual dynamic dictionaries using priority queues and perform adaptive score calibration via attention mechanisms.
The overall pipeline of \ourmethod~is illustrated in Figure~\ref{fig:framework}.

\subsection{Boundary-Aware Latent Pattern Modeling}

\noindent \textbf{Subgroup Partitioning Based on Initial Judgment.}
We utilize the pre-trained model $f$ to extract the representation of each test sample $G \in \mathcal{D}_{\text{test}}$ and compute its initial OOD score $S_{\text{Pre}} = \mathcal{L}_{\text{Pre}}(f; G)$ using Eq.~\eqref{eq:pre}. This score serves as an initial judgment of the sample’s distributional status.

\noindent \textbf{Graphon Estimation for Latent Factor Construction.}
To capture the structural differences among test samples and model their distributional variation, we employ graphons to estimate the characteristic topologies in different subsets of graphs.
A graphon $W: \Omega^2 \rightarrow [0, 1]$ defines the probability of edge existence between any two latent positions sampled from a base space $\Omega$. Given a graphon, a random graph can be generated as follows:
\begin{equation}
\begin{aligned}
v_n &\sim \text{Uniform}(\Omega), \quad \text{for } n = 1, \dots, N, \\
a_{nn'} &\sim \text{Bernoulli}(W(v_n, v_{n'})), \quad \text{for } n, n' = 1, \dots, N,
\label{eq:bern}
\end{aligned}
\end{equation}
where $v_n$ denotes the latent position of node $n$, and $a_{nn'}$ indicates whether an edge exists between nodes $n$ and $n'$. This process results in an adjacency matrix $\mathbf{A} \in \{0,1\}^{N \times N}$, which defines the structure of a sampled graph $\tilde{G}(\tilde{\mathcal{V}}, \tilde{\mathcal{E}})$ with $\tilde{\mathcal{V}} = \{1, \dots, N\}$ and $\tilde{\mathcal{E}} = \{(n, n') \mid a_{nn'} = 1\}$.

Since the true graphon is an unknown function and cannot be recovered in closed form, we adopt the step-function approximation commonly used in prior work~\cite{chatterjee2015matrix,xu2021learning,yuan2025graver}. A step-function graphon $W^P: [0,1]^2 \rightarrow [0,1]$ is expressed as: $W^P(x, y) = \sum_{n,n'=1}^{N} w_{nn'} \, \mathbb{1}_{\mathcal{P}_n \times \mathcal{P}_{n'}}(x, y)$,
where $\mathcal{P} = (\mathcal{P}_1, \dots, \mathcal{P}_N)$ is a uniform partition of $[0,1]$ into $N$ intervals, and $w_{nn'} \in [0,1]$ represents the estimated connection probability between intervals $\mathcal{P}_n$ and $\mathcal{P}_{n'}$. The indicator function $\mathbb{1}_{\mathcal{P}_n \times \mathcal{P}_{n'}}(x,y)$ equals 1 if $(x, y) \in \mathcal{P}_n \times \mathcal{P}_{n'}$ and 0 otherwise.

Based on $S_{\text{Pre}}$, we partition all samples in the current test-time batch into two mutually exclusive subsets:$\mathcal{D}^{\text{batch}}_{\text{test}} = \mathcal{C}^{\text{id}} \cup \mathcal{C}^{\text{ood}}$,
where $\mathcal{C}^{\text{ood}} = \{ W_{i,m} \}_{\tilde{y}=1}^{M}$ and $\mathcal{C}^{\text{id}} = \{ W_{i,m'} \}_{\tilde{y}=0}^{M'}$, with $M$ and $M'$ denoting the number of samples initially predicted as OOD and ID, respectively. This partitioning allows ID and OOD candidate samples to be processed separately during downstream graphon mixup and dictionary construction, relying only on the pre-trained model and soft predictions, without requiring ground-truth supervision.

\noindent \textbf{Graphon Mixup for Discriminative Typology Expansion.}
After the partitioning step, test-time samples are divided into two disjoint subgroups and separate sets of graphons are estimated to model the structural patterns within each group. However, the discriminative topological factors responsible for distributional differences are often multifaceted rather than governed by a single mode. Moreover, the estimated graphons may not sufficiently capture structures near the boundary regions, leading to unstable detection and poor generalization, especially in early-stage inference.

To alleviate this issue, we propose a graphon-level mixup strategy performed within each subgroup (i.e., among ID graphons and among OOD graphons separately). This approach interpolates between graphons derived from structurally distinct samples within the same class, thereby enhancing internal structural diversity and enriching the boundary space.
Formally, let $W_i$ and $W_j$ be two graphons estimated from the same group (e.g., $\mathcal{C}^{\text{ood}}$). We define their mixed graphon as:
\begin{equation}
W_s = \lambda W_i + (1 - \lambda) W_j, \quad \lambda \in [0, 1],
\label{eq:mixup}
\end{equation}
where $\lambda$ is a balancing hyperparameter. The resulting $W_s$ lies in the convex hull of $W_i$ and $W_j$ and can be interpreted as a new generative process that inherits structural traits from both sources. Sampling from $W_s$ generates graphs located in the interpolated region between the two subgroups, which helps bridge discontinuities in the estimated structure space and populate low-density zones near the ID/OOD boundary.
To formalize this notion, we introduce the concept of a \emph{discriminative typology}, which characterizes the essential structural properties that determine a graph’s subgroup membership.

\begin{definition}[\textbf{Discriminative Typology}]
Given a graph $G$, a discriminative typology $T_G$ is a structural pattern that reflects the most representative and characteristic features of $G$ with respect to its latent distribution,~\ie, ID or OOD.
\label{def:1}
\end{definition}
\noindent Intuitively, typologies summarize structural traits that differentiate subgroups within a distribution and intuitively capture the generative semantics of graph samples. Our hypothesis is that graphons estimated from a group of graphs encode their typological characteristics, and linear combinations of such graphons preserve essential features from the source groups.

\begin{theorem} \label{thm:mix}
Let $W_G$ and $W_H$ be graphons estimated from two subgroups $G$ and $H$ of the same distribution type (\ie, both ID or both OOD). Let the interpolated graphon be defined as $W_s = \lambda W_G + (1 - \lambda) W_H$, where $\lambda \in [0,1]$. Then, for any discriminative typology $T_G$ and $T_H$:
\begin{equation}
\begin{aligned}
    \left| t(T_G, W_s) - t(T_G, W_G) \right| &\leq (1 - \lambda) \cdot \delta_{GH}, \\
    \left| t(T_H, W_s) - t(T_H, W_H) \right| &\leq \lambda \cdot \delta_{GH},
\end{aligned}
\end{equation}
where $t(F, W) = \int_{[0,1]^{|\tilde{\mathcal{V}}|}} \prod_{(i, j) \in {|\tilde{\mathcal{E}}|}} W(x_i, x_j) \, \prod_{i \in {|\tilde{\mathcal{V}}|}} dx_i$ denotes the homomorphism density of structure $F$ in graphon $W$, and $\delta_{GH} = \| W_G - W_H \|_{\square}$ is the cut norm distance between $W_G$ and $W_H$. 
The detailed proof is in Appendix D.
\label{thm:1}
\end{theorem}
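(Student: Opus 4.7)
The plan is to reduce both bounds to a single application of the counting lemma for graphons, which states that for any finite graph (motif) $F$ and any pair of graphons $U, V$, the homomorphism density is Lipschitz in the cut norm: $|t(F,U) - t(F,V)| \leq c_F \cdot \|U-V\|_\square$, with $c_F$ a constant depending only on the structure of $F$ (this is the standard Counting Lemma in Lovász's treatment of graph limits, which I would invoke as a black box).

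First I would exploit the algebraic structure of the mixup. Since $W_s = \lambda W_G + (1-\lambda) W_H$, we have the elementary identities
\begin{equation}
W_s - W_G = (1-\lambda)(W_H - W_G), \qquad W_s - W_H = \lambda (W_G - W_H).
\end{equation}
Because the cut norm is absolutely homogeneous, this immediately yields $\|W_s - W_G\|_\square = (1-\lambda)\,\delta_{GH}$ and $\|W_s - W_H\|_\square = \lambda\,\delta_{GH}$. Plugging each identity into the Counting Lemma with $F = T_G$ and $F = T_H$ respectively produces the two claimed inequalities (up to the motif-dependent constant that I expect the authors to absorb under a convention that $T_G, T_H$ are treated as fixed bounded structures, or under a normalization of $t(\cdot,\cdot)$ as used in their Definition 1).

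Second, I would make the role of the discriminative typology $T_G$, $T_H$ explicit. Per Definition 1, these are structural patterns of the sampled graph; under the integral form of $t(F,W)$ given in the statement, each $T_G$ is a fixed finite graph, so the Counting Lemma applies verbatim. A tidy way to make this transparent is to write
\begin{equation}
t(F, W_s) - t(F, W_G) = \int_{[0,1]^{|V(F)|}} \Bigl( \prod_{(i,j) \in E(F)} W_s(x_i,x_j) - \prod_{(i,j) \in E(F)} W_G(x_i,x_j) \Bigr) \prod_{i \in V(F)} dx_i,
\end{equation}
then telescope the product edge by edge. Each telescoped term contains a single factor $W_s(x_i,x_j) - W_G(x_i,x_j) = (1-\lambda)(W_H - W_G)(x_i,x_j)$ multiplied by a product of graphon values in $[0,1]$; bounding the remaining factors by $1$ and integrating yields the cut-norm-controlled bound, with the symmetric statement for $T_H$ following by swapping the roles of $G$ and $H$.

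The main obstacle I anticipate is the precise constant: the raw Counting Lemma produces a factor of $|E(T_G)|$ (or $|E(T_H)|$) in front of the cut norm, whereas the stated inequality omits it. I would handle this either by (i) appealing to a normalized version of homomorphism density in which the constant is absorbed, consistent with how the authors will have set up the definitions in their appendix, or (ii) restating the bound with the explicit $|E(T_G)|$ factor and arguing that, since the discriminative typology is a fixed motif in their framework, this factor is a benign constant that does not affect the qualitative conclusion that structural traits are preserved. All other steps—linearity of the mixup, homogeneity of the cut norm, and telescoping the edge product—are routine once this accounting is fixed.
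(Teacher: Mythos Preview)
Your proposal is correct and matches the paper's proof essentially step for step: the paper applies the Counting Lemma (stated as $|t(F,W)-t(F,W')|\le \mathrm{e}(F)\,\|W-W'\|_\square$) to $W_s-W_G=(1-\lambda)(W_H-W_G)$ and $W_s-W_H=\lambda(W_G-W_H)$, pulls the scalar out of the cut norm, and then explicitly absorbs the edge-count factors $\mathrm{e}(T_G),\mathrm{e}(T_H)$ by treating the typologies as constant-size motifs---exactly the obstacle you flagged and the resolution you proposed in option (ii). Your telescoping sketch is just an inline unpacking of the Counting Lemma and is not needed once that lemma is invoked.
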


\textbf{Remark:} The theorem indicates that the mixed graphon $W_s$ retains the key structural characteristics from both $W_G$ and $W_H$, with bounded deviations depending on the mixing ratio $\lambda$ and the structural dissimilarity between the original subgroups. Since $W_G$ and $W_H$ originate from the same distribution (either ID or OOD), the synthetic graphs sampled from $W_s$ remain typologically consistent with their source distribution, enabling meaningful boundary exploration without compromising distributional integrity.
Through this graphon-level mixup procedure, we can generate an arbitrary number of graphs at test-time, filling in the low-density regions between known ID and OOD modes and improving the robustness of boundary estimation.

\noindent\textbf{Random Size Sampling for Boundary Diversity.}
To enhance structural diversity and better approximate the true distributional variability among graphs, we introduce a random size-based sampling strategy. Although an interpolated graphon $W_s \in [0,1]^{N \times N}$ allows infinite graph generation, naive sampling typically results in graphs of size close to $N$, limiting diversity.
To mitigate this, we randomly select a target size $r \in [2, N]$ and generate the graph from sampled graphon $W_{s}' \in [0,1]^{r \times r}$. The existence of an edge between nodes $i$ and $j$ is determined by sampling from a Bernoulli distribution in Eq.~\eqref{eq:bern} with the parameter $W_{s}'(i, j)$.

\subsection{Adaptive Calibration via Dual Dynamic Dictionary}

\noindent \textbf{Dual Priority Queues for Dynamic Dictionary Maintenance.}
As test-time samples arrive in successive batches, the boundary between ID and OOD samples evolves dynamically. To adaptively track this boundary, we maintain two separate dynamic dictionaries for ID and OOD samples, each implemented as a priority queue. These dictionaries are updated online according to the sample’s relative position to the ID/OOD boundary, as estimated from the OOD score.

Intuitively, if a sample is more OOD-like than the least OOD sample (\ie, on the boundary side of the OOD distribution), it is added to the OOD dictionary; similarly, if a sample is more ID-like than the least ID-like sample, it is added to the ID dictionary. In practice, this means that the right tail of the ID score distribution, where ID samples are most similar to OOD, is inserted into the ID dictionary, while the left tail of the OOD score distribution, where OOD samples are most similar to ID, is inserted into the OOD dictionary. We refer to these as \textit{latent ID features} and \textit{latent OOD features}, respectively, as they represent boundary-side discriminative typologies.
The initial dictionaries are constructed based on the pre-trained model's score, and as test-time progresses, these dictionaries are continuously enriched by newly generated synthetic samples from graphon mixup, which increases the diversity of latent patterns near the boundary.



During inference, the ID and OOD dictionaries are maintained as fixed-length priority queues. This design allows encoded features from previous batches to be reused, decoupling the dictionary size from the mini-batch size. The queue size $l$ is a tunable hyperparameter and enables storage of more diverse and representative structures. Taking the OOD dictionary as an example, we denote it as $\mathcal{K}^{\text{ood}}_{l} = \{ k_1^{\text{ood}}, k_2^{\text{ood}}, \dots, k^{\text{ood}}_{l'} \}$ with $l \geq l'$. New candidates are added to the queue only if their OOD score exceeds that of the front element. In this setup, the front of the OOD queue always corresponds to the sample closest to the ID/OOD boundary.
Similarly, we maintain the ID dictionary $\mathcal{K}^{\text{id}}_{l}$ using the same mechanism, where the front represents the least ID-like inlier.

In summary, we dynamically feed the lower left tail of the OOD score distribution into the OOD dictionary, and the higher right tail of the ID score distribution into the ID dictionary. This dual-priority-queue mechanism ensures that both dictionaries retain the most representative and boundary-sensitive graphon-derived features, allowing for adaptive and efficient modeling of the evolving ID/OOD structure during test time.

\noindent \textbf{Adaptive Attention-based Score Calibration.}
To enhance calibration adaptively to capture boundary-aware representations, we introduce an attention mechanism over the ID and OOD dictionaries. Since attention scores are often concentrated on a small subset of keys, we compute attention over only the top-$\mathbb{K}$ most relevant entries, improving efficiency and reducing noise from irrelevant matches.
Taking OOD dictionary as an example, we first derive the query $q = f(G)$ for a test sample $G \in \mathcal{D}_{test}$ and compute the cosine similarity $\cos (k^{ood}_i, q)$ with each key $k^{ood}_i$ in OOD dictionary $\mathcal{K}^{ood}_{n^{\prime}}$. Then, we denote the sorted list of these similarities in ascending order as \(\cos(k^{ood}_{(1)}, q) \leq \cos(k^{ood}_{(2)}, q) \leq \cdots \leq \cos(k^{ood}_{(n^{\prime})}, q)\).
The top $\mathbb{K}$ entries are selected to form the candidate set $\mathcal{\hat{K}}^{\text{ood}}_{(:\mathbb{K})}$.
We construct the attention components as:
\begin{equation}
\begin{aligned}
\mathbf{Q} = q\mathbf{W}_Q, 
\mathbf{K} = \mathcal{\hat{K}}^{\text{ood}}_{(:\mathbb{K})} \mathbf{W}_K, 
\mathbf{V} = \mathcal{\hat{K}}^{\text{ood}}_{(:\mathbb{K})} \mathbf{W}_V, \\
\text{ATTN}_{\text{out}}(\mathbf{Q}, \mathbf{K}, \mathbf{V}) = \text{softmax}( \frac{\mathbf{Q} \mathbf{K}^\top}{\sqrt{d}} ) \mathbf{V},
\end{aligned}
\end{equation}
where $\mathbf{W}_Q \in \mathbb{R}^{d \times d}$ and $\mathbf{W}_K, \mathbf{W}_V \in \mathbb{R}^{\mathbb{K} \times d}$ are learnable matrices.
The calibrated OOD score based on OOD dictionary is then defined as:
\begin{equation} \label{eq:s-out}
S_{\text{out}}(G) = \text{ATTN}_{\text{out}}(\mathbf{Q}, \mathbf{K}, \mathbf{V}).
\end{equation}
The complete OOD dictionary includes both the priority queue and memory bank:
$\mathcal{K}^{\text{ood}}_{\text{total}} = \mathcal{K}^{\text{ood}}_l \cup \mathcal{K}^{\text{ood}}_{\text{mb}}$, where $\mathcal{K}^{\text{ood}}_{\text{mb}}$ denotes a fixed-size memory buffer.
Similarly, we calculate the negative cosine similarity between the query and each key in the ID dictionary:
\begin{equation} \label{eq:s-in}
S_{\text{in}}(G) = -\text{ATTN}_{\text{in}}(\mathbf{Q}, \mathbf{K}, \mathbf{V}),
\end{equation}
where $\mathbb{K}$-th largest cosine similarity is selected, and the ID dictionary is composed as $\mathcal{K}^{id}_{total} = \mathcal{K}^{id}_l \cup \mathcal{K}^{id}_{mb}$.
The final boundary-aware calibrated score is then given by:
\begin{equation} \label{eq:s-atten}
S_\text{Atten}(G) = S_{\text{in}}(G) + S_{\text{out}}(G),
\end{equation}
If $G$ is an ID sample, it will typically have high similarity with the ID dictionary and low similarity with the OOD dictionary, resulting in a low $S_{\text{Attn}}(G)$. Conversely, OOD samples yield higher values.
This calibration mechanism encourages a clearer separation of score distributions between ID and OOD samples by modeling diverse features and structural boundaries.
We integrate $S_{Attn}(G)$ into the overall objective:
\begin{equation} \label{eq:s-baca}
S_\text{BaCa} = S_{\text{Pre}} + \beta \cdot S_\text{Attn}(G),
\end{equation}
where $\beta$ is a trade-off hyperparameter controlling the influence of test-time similarity calibration.

\noindent \textbf{Training Objective.} 
To optimize the learnable parameters $\mathbf{W}_Q$, $\mathbf{W}_K$, and $\mathbf{W}_V$, we employ a dual binary cross-entropy loss that supervises the attention-based similarity scores.
Formally, the training objective is defined as:
\begin{equation}
\begin{aligned}
\label{eq:loss}
\mathcal{L} = 
&- \mathbb{E}_{\mathcal{K}^{id}} 
\left[ \log(\text{ATTN}_{\text{in}}) + \log(1-\text{ATTN}_{\text{out}})\right] \\
&- \mathbb{E}_{\mathcal{K}^{ood}}
\left[ \log(1-\text{ATTN}_{\text{in}}) + \log(\text{ATTN}_{\text{out}})\right].
\end{aligned}
\end{equation}
This loss encourages ID samples to yield high attention scores with ID dictionary and low scores with OOD dictionary, while OOD samples are trained to exhibit the opposite pattern.

\noindent\textbf{Computational Complexity Analysis.} 
For graph generation, suppose we aim to generate $l$ graphs with $N$ nodes. The complexity is $\mathcal{O}(lN)$ for node sampling and $\mathcal{O}(lN^2)$ for edge construction, resulting in a total complexity of $\mathcal{O}(lN^2)$.
For dynamic dictionary construction, \ourmethod~relies solely on dot-product operations between test-time samples and stored entries. This is equivalent to adding a linear transformation layer, introducing a per-sample complexity of $\mathcal{O}(dl)$, where $d$ is the feature dimension and $l$ denotes the priority queue size. Updating the priority queue has a complexity of $\mathcal{O}(\log l)$ per insertion.
For the attention-based score calibration, given query $\mathbf{Q} \in \mathbb{R}^{1 \times d}$ and key-value matrices $\mathbf{K}, \mathbf{V} \in \mathbb{R}^{\mathbb{K} \times d}$ from the top-$\mathbb{K}$ dictionary entries, the main computation involves $\mathbf{QK}^\top \in \mathbb{R}^{1 \times \mathbb{K}}$ and its softmax weighting over $\mathbf{V}$, resulting in $\mathcal{O}(2\mathbb{K}d)$ complexity per test sample. Since $\mathbb{K}$ is typically small, this module introduces negligible overhead and scales well during inference.

\begin{table*}[t]
\centering
\vspace{-0.2cm}
\resizebox{1\textwidth}{!}{
\begin{tabular}{l | cccccccccc|c}
\toprule
ID dataset & BZR & PTC-MR & AIDS & ENZYMES & IMDB-M & Tox21 & FreeSolv & BBBP & ClinTox & Esol & \multirow{2}{1.8em}{A.A.} \\
OOD dataset & COX2 & MUTAG & DHFR & PROTEIN & IMDB-B & SIDER & ToxCast & BACE & LIPO & MUV \\
\midrule
\multicolumn{12}{l}{\textbf{Graph Kernel Based Methods}} \\
PK-LOF      & 42.22±8.39 & 51.04±6.04 & 50.15±3.29 & 50.47±2.87 & 48.03±2.53 & 51.33±1.81 & 49.16±3.70 & 53.10±2.07 & 50.00±2.17 & 50.82±1.48 & 49.63 \\
PK-OCSVM    & 42.55±8.26 & 49.71±6.58 & 50.17±3.30 & 50.46±2.78 & 48.07±2.41 & 51.33±1.81 & 48.82±3.29 & 53.05±2.10 & 50.06±2.19 & 51.00±1.33 & 49.52 \\
PK-iF & 51.46±1.62 & 54.29±4.33 & 51.10±1.43 & 51.67±2.69 & 50.67±2.47 & 49.87±0.82 & 52.28±1.87 & 51.47±1.33 & 50.81±1.10 & 50.85±3.51 &51.45  \\
WL-LOF   & 48.99±6.20 & 53.31±8.98 & 50.77±2.87 & 52.66±2.47 & 52.28±4.50 & 51.92±1.58 & 51.47±4.23 & 52.80±1.91 & 51.29±3.40 & 51.26±1.31 & 51.68  \\
WL-OCSVM    & 49.16±4.51 & 53.31±7.57 & 50.98±2.71 & 51.77±2.21 & 51.38±2.39 & 51.08±1.46 & 50.38±3.81 & 52.85±2.00 & 50.77±3.69 & 50.97±1.65 & 51.27 \\
WL-iF  & 50.24±2.49 & 51.43±2.02 & 50.10±0.44 & 51.17±2.01 & 51.07±2.25 & 50.25±0.96 & 52.60±2.38 & 50.78±0.75 & 50.41±2.17 & 50.61±1.96 & 50.87 \\
\midrule
\multicolumn{12}{l}{\textbf{Anomaly Detection Methods}} \\
OCGIN  & 76.66±4.17 & 80.38±6.84 & 86.01±6.59 & 57.65±2.96 & 67.93±3.86 & 46.09±1.66 & 59.60±4.78 & 61.21±8.12 & 49.13±4.13 & 54.04±5.50 & 63.87 \\
GLocalKD  & 75.75±5.99 & 70.63±3.54 & 93.67±1.24 & 57.18±2.03 & 78.25±4.35 & 66.28±0.98 & 64.82±3.31 & 73.15±1.26 & 55.71±3.81 & 86.83±2.35 & 72.23 \\
\midrule

\multicolumn{12}{l}{\textbf{Self-supervised Training Methods}} \\
InfoGraph-iF & 63.17±9.74 & 51.43±5.19 & 93.10±1.35 & 60.00±1.83 & 58.73±1.96 & 56.28±0.81 & 56.92±1.69 & 53.68±2.90 & 48.51±1.87 & 54.16±5.14 & 59.60 \\
InfoGraph-MD & 86.14±6.77 & 50.79±8.49 & 69.02±11.67 & 55.25±3.51 & \underline{81.38±1.14} & 59.97±2.06 & 58.05±5.46 & 70.49±4.63 & 48.12±5.72 & 77.57±1.69 & 65.68  \\
GraphCL-iF & 60.00±3.81 & 50.86±4.30 & 92.90±1.21 & 61.33±2.27 & 59.67±1.65 & 56.81±0.97 & 55.55±2.71 & 59.41±3.58 & 47.84±0.92 & 62.12±4.01 & 60.65  \\
GraphCL-MD & 83.64±6.00 & 73.03±2.38 & 93.75±2.13 & 52.87±6.11 & 79.09±2.73 & 58.30±1.52 & 60.31±5.24 & 75.72±1.54 & 51.58±3.64 & 78.73±1.40 & 70.70 \\
 
GOOD-D & \underline{93.00±3.20} & 78.43±2.67 & 98.91±0.41 & {61.89±2.51} & 79.71±1.19 & 65.30±1.27 & 70.48±2.75 & 81.56±1.97 & 66.13±2.98 & 91.39±0.46 & \underline{78.68} \\

HGOE & $-$ & $-$ & \underline{99.28±0.34} & 64.44±2.19 & \textbf{81.74±2.25} & 68.24±0.60 & \underline{82.89±2.33} & \underline{83.46±1.79} & \underline{70.09±1.52} & \underline{92.64±2.44} & $-$ \\

\midrule
\multicolumn{12}{l}{\textbf{Test-time and Data-centric Methods}} \\

AAGOD-GIN\ensuremath{_S+} & 76.75 & $-$ & $-$ & 66.22 & 59.00 & 64.26  & $-$ & 67.80  & $-$ & $-$ & $-$ \\
AAGOD-GIN\ensuremath{_L+} & 76.00 & $-$ & $-$ & 65.89 & 62.70 & 57.59  & $-$ & 57.13  & $-$ & $-$ & $-$ \\
GOODAT & 82.16{±0.15} & \underline{81.84{±0.57}} & {96.43{±0.25}} & \underline{66.29{±1.54}} & 79.03{±0.03} & \underline{68.92{±0.01}} & {68.83{±0.02}} & {77.07{±0.03}} & {62.46{±0.54}} & {85.91{±0.27}} & 76.89  \\

\midrule 
\rowcolor{lightlightgray}\ourmethod & \textbf{94.23±0.42} & \textbf{86.53±1.39} & \textbf{99.86±0.03} & \textbf{67.10±1.43} & 80.93±0.69& \textbf{69.82±0.59} & \textbf{83.12±0.42} & \textbf{93.11±0.29} & \textbf{82.57±0.23} & \textbf{95.31±0.14} & \textbf{85.26} \\

{Improve} & \textcolor{lightgreen}{$\triangle$+1.23} & \textcolor{lightgreen}{$\triangle$+4.69} & \textcolor{lightgreen}{$\triangle$+0.58} & \textcolor{lightgreen}{$\triangle$+0.81} & \textcolor{orange}{$\nabla$-0.81}& \textcolor{lightgreen}{$\triangle$+0.90} & \textcolor{lightgreen}{$\triangle$+0.23} & \textcolor{lightgreen}{$\triangle$+9.65} & \textcolor{lightgreen}{$\triangle$+12.48} & \textcolor{lightgreen}{$\triangle$+2.67} & \textcolor{lightgreen}{$\triangle$+6.58} \\

\bottomrule
\end{tabular}}
\caption{OOD detection results in terms of AUC ($\%$, mean $\pm$ std). The best and runner-up results are highlighted with \textbf{bold} and \underline{underline}, respectively. A.A. is short for average AUC. The results of baselines are derived from the published works, with unreported results denoted by `$-$'.} 

\label{tab:main}
\vspace{-0.3cm}
\end{table*}

\begin{table}[t]
\centering
\resizebox{\linewidth}{!}{
\begin{tabular}{cc | cccc}
\toprule
\multirow{2}{*}{\textit{ID Dict.}} & \multirow{2}{*}{\textit{OOD Dict.}} & BZR & PTC-MR & AIDS & ENZYMES\\
&   & COX2 & MUTAG & DHFR & PROTEIN\\
\midrule
\xmark & \xmark & 92.95±0.15& 77.59±4.37 & 99.24±0.06 & 63.14±0.00 \\

\xmark & \cmark & 93.22±0.12 & \underline{85.71±1.88} & \underline{99.80±0.04} & 65.51±2.39 \\

\cmark & \xmark  & \underline{93.66±0.03} & 84.65±2.45 & 99.50±0.01 & \underline{66.30±2.43} \\

\midrule
\cmark & \cmark
& \textbf{94.23±0.42} & \textbf{86.53±1.39} & \textbf{99.86±0.03} & \textbf{67.10±1.43} \\
\bottomrule
\end{tabular}}
\caption{Ablation study results of \ourmethod and its variants in terms of AUC ($\%$, mean $\pm$ std).} 
\label{tab:ablation-half}
\vspace{-0.5cm}
\end{table}

\vspace{-2.0mm}
\section{Experiment}

In this section, we empirically evaluate the effectiveness of the proposed \ourmethod.
\footnote{The code of \ourmethod is available at {https://github.com/name-is-what/BaCa}.} 

\noindent \textbf{Datasets.}
For OOD detection, we employ 10 pairs of datasets from two mainstream graph data benchmarks (i.e., TUDataset~\cite{tu_Morris2020} and OGB~\cite{ogb_hu2020open}) following GOOD-D~\cite{liu2023good}. 
Each pair of datasets belongs to the same field and shares similar features, but exhibits distribution shifts between the two datasets in the pair. 
Further details are shown in Appendix E.1.

\noindent \textbf{Baselines.}
We compare \ourmethod with a wide range of graph OOD detection baselines, grouped into the following categories: (1) \textbf{graph kernel based methods}~\cite{pk_neumann2016propagation,wlgk_shervashidze2011weisfeiler}, (2) \textbf{anomaly detection methods}~\cite{glocalkd_ma2022deep,ocgin_zhao2021using}, (3) \textbf{self-supervised methods}~\cite{infograph_sun2020infograph,graphcl_you2020graph}, and (4) \textbf{test-time and data-centric methods}~\cite{guo2023data,wang2024goodat,junwei2024hgoe}.

\noindent \textbf{Evaluation and Implementation.}
We evaluate \ourmethod with a popular OOD detection metric, i.e., area under receiver operating characteristic Curve (AUC). Higher AUC values indicate better performance.  
The reported results are the mean performance with standard deviation after 5 runs.
We perform grid search to select the key hyper-parameters of \ourmethod.
During ID and OOD graphon mixup, \(\lambda\) was randomly chosen from the range \([0.01,1.0]\). 
Our \ourmethod is instantiated on top of the well-trained 5-layer GIN~\cite{gin_xu2019how}, and improves its OOD detection performance in a fully post-hoc and test-time setting without model update or auxiliary data.

\noindent \textbf{Performance on OOD Detection.}
We compare \ourmethod with representative baselines on graph OOD detection tasks in Table~\ref{tab:main}. \ourmethod achieves the best performance on 7 out of 10 dataset pairs, and runner-up performance on two others. Compared with end-to-end baselines such as GOOD-D~\cite{liu2023good} and HGOE~\cite{junwei2024hgoe}, our method consistently yields higher detection accuracy. 
Notably, under the same test-time setting, \ourmethod outperforms GOODAT~\cite{wang2024goodat} on all 10 datasets, with an average AUC improvement of 8.37\%.
We also observe that both GOODAT and \ourmethod perform relatively poorly on the IMDB-M/IMDB-B pair. This is likely due to their structural similarity, as both originate from the same dataset source. 
Further analysis with case study are provided in Appendix E.5.

\noindent \textbf{Ablation Study.}
We perform ablation studies by selectively removing the ID dictionary and OOD dictionary (denoted as \textit{ID Dict.} and \textit{OOD Dict.}, respectively). The results are summarized in Table~\ref{tab:ablation-half}.
We first observe that \ourmethod with both dictionaries (last row) consistently achieves the best performance across all dataset pairs, highlighting the effectiveness of our dual-dictionary design. The first row corresponds to removing both dictionaries, which reduces the model to the pretrained baseline without score calibration. Notably, using only one of the two dictionaries (either ID or OOD) leads to a clear drop in performance, indicating that both are necessary to enable accurate boundary-aware score calibration.

\begin{figure}[!t]
\begin{center}
\centerline{\includegraphics[width=0.97\linewidth]{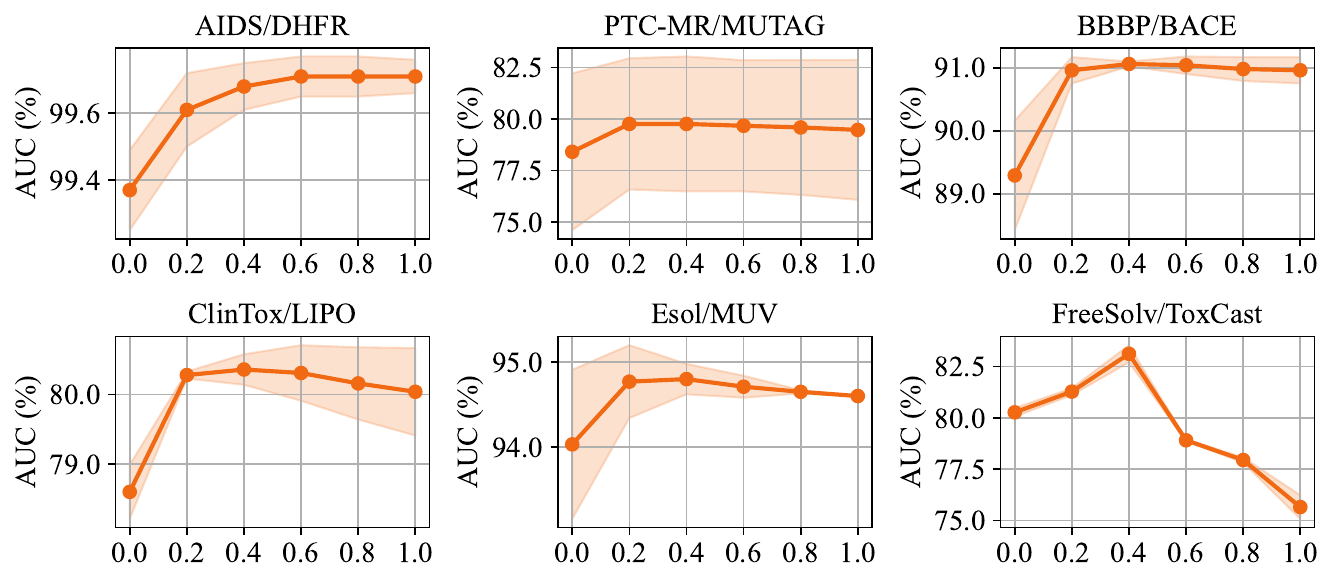}}
\vspace{-0.2cm}
\caption{The sensitivity of $\beta$ on calibration.}
\label{fig:sensi-b}
\end{center}
\vspace{-0.6cm}
\end{figure}

\begin{figure}[!t]
\begin{center}
\centerline{\includegraphics[width=0.97\linewidth]{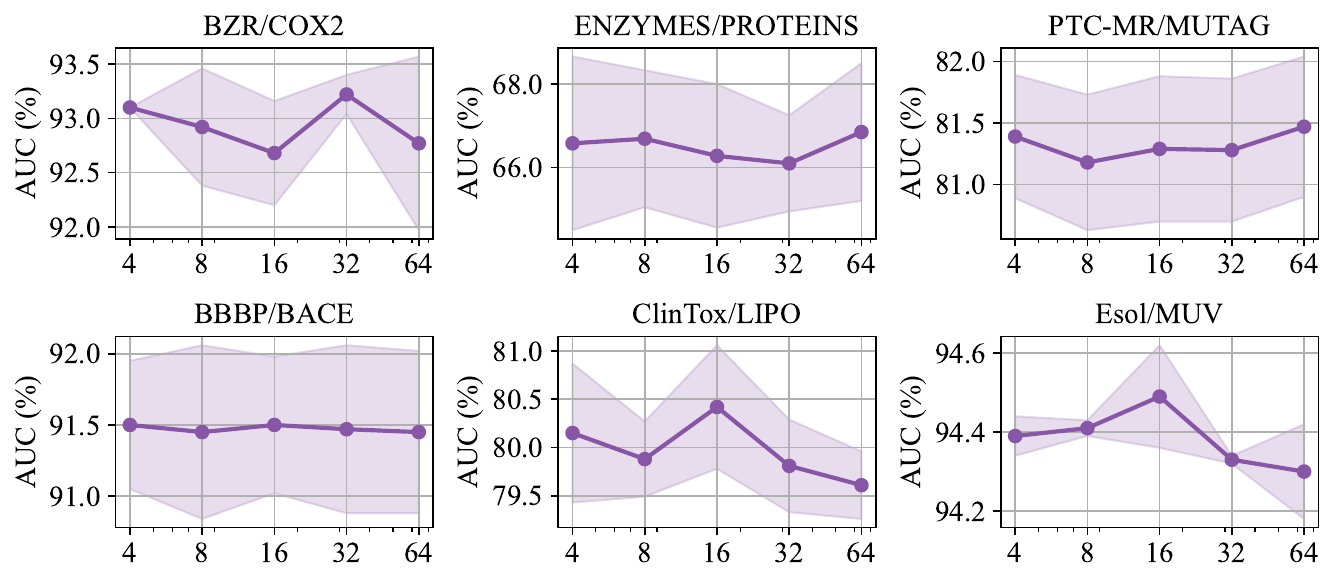}}
\vspace{-0.2cm}
\caption{The sensitivity of $\mathbb{K}$ on calibration.}
\label{fig:sensi-k}
\end{center}
\vspace{-0.8cm}
\end{figure}


\noindent \textbf{Sensitivity Analysis of $\beta$.}
We further study the effect of $\beta$, the weight assigned to the calibration term in the final score. As shown in Figure~\ref{fig:sensi-b}, we vary $\beta$ from 0.1 to 1.0. While performance is relatively stable in a mid-range band, too small or too large values of $\beta$ may suppress or over-amplify the influence of similarity-based score correction. These results confirm that \ourmethod is robust to reasonable choices of $\beta$, but tuning remains important for optimal performance.

\noindent \textbf{Sensitivity Analysis of $\mathbb{K}$.} 
To analyze the sensitivity of $\mathbb{K}$ for \ourmethod, we alter the value from 4 to 64. The AUC w.r.t different selections of $\mathbb{K}$ is plotted in Figure~\ref{fig:sensi-k}. 
Results demonstrate the performance is sensitive to changes in $\mathbb{K}$ and contains a reasonable range across different datasets.

\begin{table}[ht]
\centering
\resizebox{\linewidth}{!}{
        \begin{tabular}{c|ccccc}
            \toprule
            \multirow{2}{*}{$\lambda$} & AIDS & BZR  & PTC-MR & Esol & ClinTox   \\
            & DHFR & COX2 & MUTAG & MUV & LIPO     \\
            \midrule
            $[0.01,0.2]$ & 99.83±0.04 & 92.89±0.33 & 85.63±1.47 & 94.55±0.16 & 79.62±0.05 \\
            $[0.2,0.4]$ & 99.81±0.06 & 92.95±0.62 & 86.00±1.43 & 94.46±0.02 & 79.99±0.79 \\
            $[0.4,0.6]$ & 99.80±0.07 & 92.71±0.21 & 85.92±1.84 & 94.38±0.11 & 80.09±0.66 \\
            $[0.6,0.8]$ & 99.80±0.06 & \textbf{92.95±0.03} & \textbf{86.04±1.55} & 94.49±0.27 & \textbf{80.13±0.84} \\
            $[0.8,1.0]$ & \textbf{99.83±0.05} & 92.89±0.62 & 85.92±1.35 & \textbf{94.53±0.23} & 79.96±0.61 \\
            \bottomrule
        \end{tabular}
    }
    \caption{Performance of \ourmethod~with different $\lambda$ ranges.}
    \label{tab:lam}
\vspace{-0.3cm}
\end{table}

\noindent \textbf{Sensitivity Analysis of $\lambda$.}
In the main results reported in Table~\ref{tab:main}, $\lambda$ was randomly sampled from the interval $[0.01, 1]$ for generating mixed graphons between ID and OOD subgroups. 
Here, we conduct a finer-grained analysis by fixing $\lambda$ to specific values within this range and examining its impact on detection performance.
As shown in Table~\ref{tab:lam}, the performance sensitivity to $\lambda$ varies across different dataset pairs. However, we observe that moderate values of $\lambda$ generally lead to stronger results on most benchmarks. This suggests that a balanced interpolation effectively preserves discriminative topological patterns from both source graphons and enhances the diversity of boundary-aware samples.



\begin{figure}[ht]
\centering
\vspace{-0.2cm}
\hspace{-0.2cm}
\includegraphics[width=.87\linewidth]{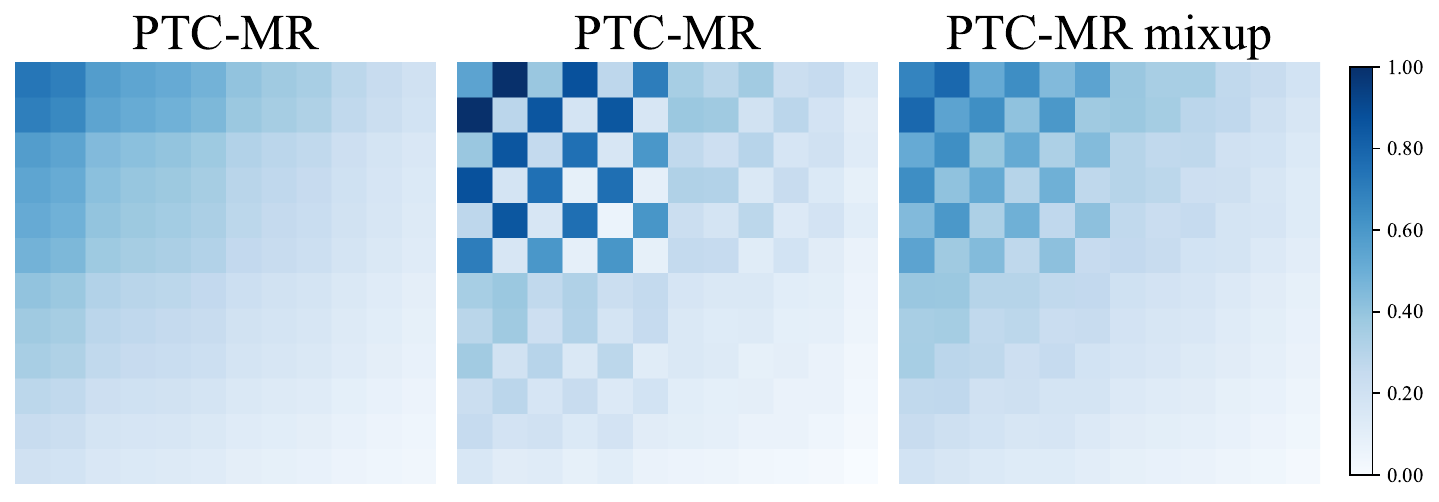}\\
\hspace{-0.2cm}
\includegraphics[width=.87\linewidth]{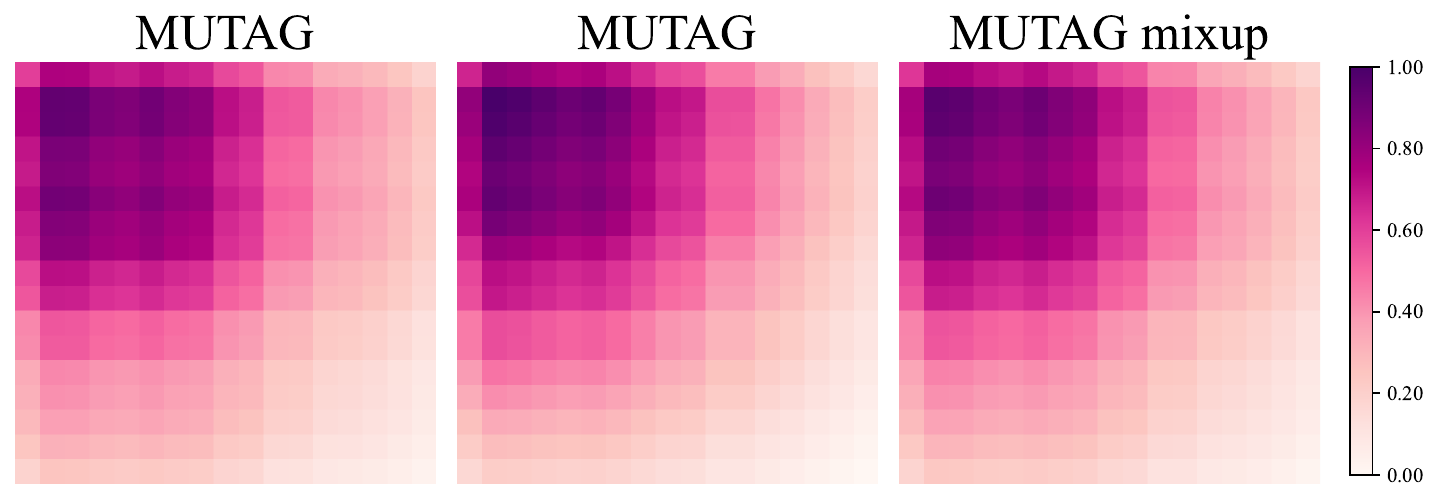}
\hspace{-0.2cm}
\vspace{-0.2cm}
\caption{Estimated graphons and their mixup results on the PTC/MUTAG (PTC as ID, MUTAG as OOD). Within each row, the first two columns are the original estimated graphons, and the third column is mixed graphon.}
\vspace{-0.3cm}
\label{fig:graphon}
\end{figure}


\begin{figure}[ht]
 \centering
\hspace{-0.18cm}
{\includegraphics[width=0.32\linewidth]{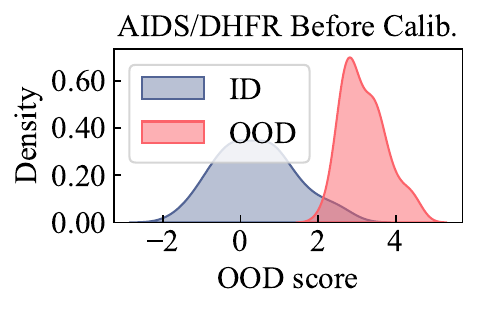}} 
\hspace{-0.18cm}
{\includegraphics[width=0.32\linewidth]{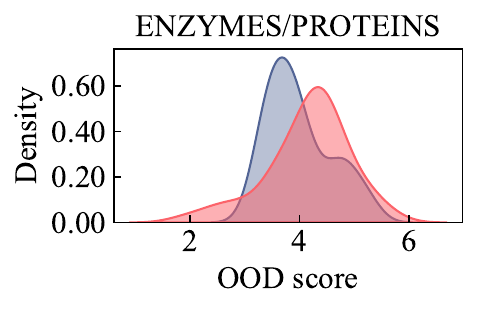}}
\hspace{-0.18cm}
{\includegraphics[width=0.32\linewidth]{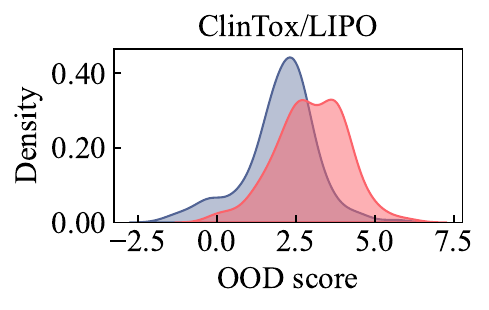}} \\

\hspace{-0.18cm}
{\includegraphics[width=0.32\linewidth]{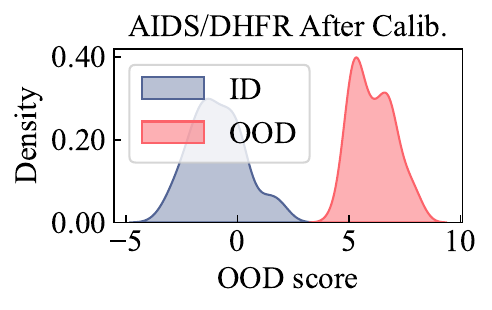}} 
\hspace{-0.18cm}
{\includegraphics[width=0.32\linewidth]{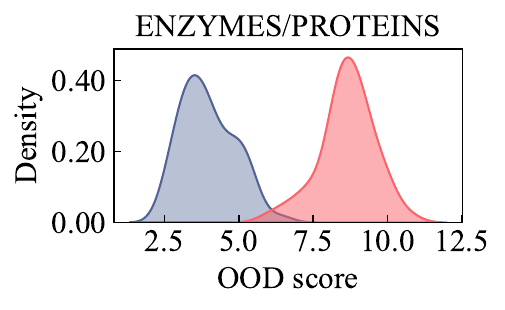}}
\hspace{-0.18cm}
{\includegraphics[width=0.32\linewidth]{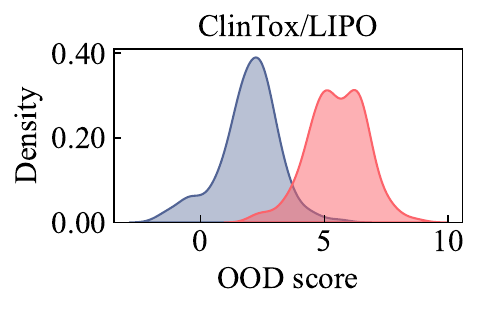}} \\
 \vspace{-0.3cm}
\caption{Score distributions on several dataset pairs. The first row shows the score distribution {\textbf{before}} calibration (abbreviated as Calib.), while the second row presents the score distribution {\textbf{after}} applying our calibration on the corresponding dataset. The overlap area between ID and OOD samples is significantly reduced after calibration using \ourmethod.}
\vspace{-0.3cm}
\label{fig:dens1}
\end{figure}

\noindent \textbf{Graphon Mixup Visualization.}
We estimate graphons of ID and OOD samples and perform graphon mixup visualized as heatmaps in Figure~\ref{fig:graphon}. 
We can observe clear structural differences between graphons from different distributions. In contrast, mixup within the same distribution preserves key structural properties while generating new graphons, effectively enhancing the diversity of discriminative typologies.
Additional visualizations are provided in Appendix E.7.

\noindent \textbf{Score Distribution Visualization.}
We visualize the OOD score distributions for ID and OOD samples across several dataset pairs before and after applying our calibration strategy in Figure~\ref{fig:dens1}. Compared to the uncalibrated setting, the overlap between ID and OOD score distributions is significantly reduced. This demonstrates that our structure-aware calibration method effectively amplifies the distributional differences between ID and OOD samples, leading to more reliable detection.

\vspace{-2.0mm}
\section{Conclusion}

In this paper, we propose \ourmethod, a boundary-aware OOD score calibration framework for test-time graph OOD detection that calibrates OOD scores without modifying pre-trained GNNs or relying on auxiliary outlier data. 
Concretely, we first perform subgroup partitioning of test samples using pre-trained scores and estimate graphons separately for ID and OOD groups. 
To address the diversity of latent structural patterns and enhance representation near the distributional boundary, we introduce a graphon mixup strategy that synthesizes discriminative topologies, which are continuously stored in dual dynamic dictionaries, maintained as priority queues. 
We then adopt a learnable attention mechanism for boundary-aware OOD score calibration, effectively reducing the overlap between ID and OOD score distributions, especially in ambiguous samples near the boundary. 
Extensive experiments across multiple benchmarks demonstrate the superiority of \ourmethod over state-of-the-art baselines.

\section*{Acknowledgments}
This work has been supported by CCSE project (CCSE-2024ZX-09).

\bibliography{reference.bib}
\appendix
\setcounter{secnumdepth}{2} 

\setcounter{table}{0}
\setcounter{figure}{0}

\setcounter{definition}{0}
\setcounter{lemma}{0}
\setcounter{theorem}{0}

\renewcommand{\thetable}{A.\arabic{table}}
\renewcommand{\thefigure}{A.\arabic{figure}}

\section{Notations}\label{sec:notation}
As an expansion of the notations in our work, we summarize the frequently used notations in Table~\ref{tab:notation}.

\begin{table}[htbp]
  \centering
  
\resizebox{.95\linewidth}{!}{
    \begin{tabular}{cl}
    \toprule
    \textbf{Notations} & \multicolumn{1}{c}{\textbf{Descriptions}} \\
    \midrule
${G} = (\mathcal{V}, \mathcal{E}, \mathbf{X})$    & Graph with the node set $\mathcal{V}$ and edge set $\mathcal{E}$ \\
$\mathcal{V}$ & The set of nodes in the graph \\ 
$\mathcal{E}$ & The set of edges in the graph \\ 
$\mathbf{X}$ & The feature matrix \\ 
$d$ & The feature dimension of the graph \\
$\mathbf{A}$ & The adjacency matrix of the graph \\
$\mathbb{P}^{id}$, $\mathbb{P}^{ood}$  & The distribution where graphs are sampled from \\
$f$ & The pre-trained GNN encoder which is frozen \\
$||\cdot||_{\Box}$ & The cut norm, and $||\cdot||_{\Box}:\mathbb{W} \rightarrow \mathbb{R}$ \\
$W, \mathbb{W}$ & The graphon \& step function and graphon space \\
$W^P$ & The step function in matrix form \\
$\lambda$ & Trade-off hyperparameter for graphon mixup\\
$S(G)$ & The calibrated OOD score of graph $G$ \\
$\mathcal{K}^{ood}, \mathcal{K}^{id}$ & The key in OOD and ID dictionary \\
$\beta$ & Trade-off hyperparameter for score calibration\\ 
    
\bottomrule
\end{tabular}%
}
\caption{Summary of notations and descriptions.}
\label{tab:notation}
\vspace{-1em}
\end{table}%

\section{Related Work}
\subsection{Graph Out-of-Distribution Detection}
Out-of-distribution detection \cite{hendrycks2016baseline,wang2022openauc} involves the task of distinguishing test samples from distributions different from the seen training data. It comprises post-hoc and fine-tuning approaches \cite{yang2021generalized}. Post-hoc methods \cite{liang2017enhancing,lee2018simple,sun2021react,wang2022watermarking} leverage the logit space and output scores of models that are trained on in-distribution data to classify ID and OOD data. Fine-tuning approaches \cite{hendrycks2018deep,du2022vos} introduce extra regularization terms during training or incorporate auxiliary training data, referred to as outlier exposure, which can be either real, synthetic, or sampled from the feature space. Outlier exposure has proven effective in enhancing OOD detection performance. 
However, these methods are typically applied to image or text data.
OOD detection for graphs remains relatively underexplored. 
These challenges highlight the need for robust representation learning methods for graphs~\cite{wu2022structural, wu2022simple,wu2023sega,wu2024uncovering,wu2025toward,hou2025structural,hou2025test}, especially in the presence of previously unseen samples~\cite{hou2024nc2d}.
Recent methods such as GLocalKD~\cite{ma2022deep} and OCGIN~\cite{zhao2023using} focus on graph anomaly detection under supervised or semi-supervised settings. GOOD-D~\cite{liu2023good} and AAGOD~\cite{guo2023data} adopt contrastive learning and graph augmentations to enhance OOD sensitivity with only ID data. 
HGOE~\cite{junwei2024hgoe} introduces a hybrid outlier exposure framework by synthesizing both internal and external graph outliers but requires access to auxiliary OOD data during training. 
More recently, GOODAT~\cite{wang2024goodat} explores a practical test-time scenario and proposes optimizing a graph masker but still requires test-time model optimization.
In this work, we propose \ourmethod, a novel OOD score calibration framework
for test-time graph OOD detection without the need for auxiliary data or pre-trained model fine-tuning.

\subsection{Test-time Training and Adaptation} 
Test-time training (TTT) aims to improve model robustness during inference by optimizing certain components using test samples. The pioneer work~\cite{sun2020test} proposes to optimize the feature extractor via an auxiliary task loss. To remove the need for training an auxiliary task, Tent~\cite{wang2020tent} minimizes the prediction entropy without accessing labeled data.
Test-time adaptation (TTA) extends this idea by adapting models at test time without modifying the training process. Recent studies in the graph domain focus on node classification tasks. For instance, GTrans~\cite{jin2022empowering} augments the test graph by generating contrastive views to enhance representation learning, while GraphPatcher~\cite{ju2023graphpatcher} generates virtual neighbors to improve low-degree node performance. These methods typically rely on heuristics or self-supervised losses to adapt to target distributions.
In contrast to TTT and TTA, which enhance model generalization under distribution shifts by fine-tuning during testing, unsupervised OOD detection serves as a prerequisite task, helping to filter unreliable test inputs before applying any adaptation or prediction. This paper focuses on test-time OOD detection, which aims to identify whether a test sample deviates from the training distribution.

\subsection{Further Discussion and Comparison with Related Methods}
Here, we discuss the connections between this paper and the most relevant studies. D2GO~\cite{hou2025test} represents the first approach that achieves test-time graph OOD detection solely through OOD score calibration. Specifically, D2GO models the essential structural patterns of ID-like and OOD-like samples by maintaining two graphon dictionaries, which respectively capture the characteristic structures of the ID and OOD domains. Inspired by this idea, our proposed method BaCa also leverages graphons to construct discriminative typologies for both ID and OOD samples.
However, after computing the similarity between the input graph and the two dictionaries, D2GO requires an additional hyperparameter to balance the contributions of the ID and OOD dictionaries when calibrating the OOD score. The impact of this hyperparameter on performance is difficult to anticipate and often depends on the specific dataset or distribution shift. To overcome this limitation, BaCa introduces a learnable attention mechanism for boundary-aware OOD score calibration. This mechanism adaptively adjusts the relative influence of the ID and OOD graphons without the need for manual hyperparameter tuning, while adding negligible computational overhead. As a result, BaCa achieves stable and competitive performance across different graph OOD detection settings.

In addition to graph OOD detection, it is worth noting that anomaly detection~\cite{wang2021cost,wang2022search,wang2024cost,zhu2023meter,zhu2024llms} represents a closely related yet distinct setting. While both tasks aim to identify samples that deviate from known distributions, anomaly detection typically focuses on detecting rare or abnormal instances within the same domain, rather than distributional shifts between domains. Nevertheless, the underlying principle of identifying boundary or low-density regions in representation space~\cite{fang2025omega,fang2024information,fang2023distributed} is shared across the two problems. In this sense, the boundary-aware calibration strategy developed in BaCa has the potential to enhance anomaly detection as well, by providing more fine-grained control over the decision boundary between normal and abnormal samples. Exploring how the proposed framework can be adapted to general anomaly detection scenarios is a promising direction for future work.

\section{Algorithm}\label{sec:algorithm}

\IncMargin{1.3em}
\begin{algorithm}[t!]
    \caption{Test-time optimization process of \ourmethod.}
    \label{alg:baca}
    \KwIn{Test sample $G$; Pre-trained GNN encoder $f$ (frozen); Number of test-time iterations $T$; Hyperparameters $\lambda$, $\mathbb{K}$, $\beta$; Initial ID/OOD dictionaries $\mathcal{K}^{id}$ and $\mathcal{K}^{ood}$.}
    \KwOut{Calibrated OOD score $S_{\text{BaCa}}(G)$.}

    Compute pre-trained score $S_{\text{Pre}}(G) \gets$ Eq. (1)\;
    \tcp{Boundary-Aware Latent Pattern Modeling}
    Estimate graphons for each subgroup using step function approximation\;
    Perform subgroup partitioning $\mathcal{D}_{\text{test}} = \mathcal{C}^{\text{id}} \cup \mathcal{C}^{\text{ood}}$\;
    \tcp{Graphon Mixup for Discriminative Typology Expansion}
    \For{each pair $(W_i, W_j)$ within ID or OOD subgroup}{
        Generate mixed graphon $W_s = \lambda W_i + (1 - \lambda) W_j$ using Eq. (3)\;
        Sample synthetic graph $\tilde{G}$ from $W_s$ using Eq. (2)\;
        Update ID/OOD dictionary using score-based priority queues\;
    }

    \tcp{Adaptive Calibration via Dual Dynamic Dictionary}
    \For{$t = 1$ to $T$}{
        Obtain representation $q = f(G)$\;
        \tcp{Attention-based Score via Top-$\mathbb{K}$ Dictionary Entries}
        Select top-$\mathbb{K}$ keys from ID/OOD dictionaries by cosine similarity\;
        Compute attention outputs $\text{ATTN}_{\text{in}}, \text{ATTN}_{\text{out}}$ using Eq. (6) and~(7)\;
        Compute calibrated score $S_{\text{Attn}}(G) = S_{\text{in}} + S_{\text{out}} \gets$ Eq.~(8)\;
        Compute final score $S_{\text{BaCa}}(G) = S_{\text{Pre}}(G) + \beta \cdot S_{\text{Attn}}(G) \gets$ Eq.~(9) \;
        Calculate attention loss $\mathcal{L} \gets$ using Eq.~(10)\;
        Update attention parameters $(\mathbf{W}_Q, \mathbf{W}_K, \mathbf{W}_V)$ via gradient descent\;
    }

    \Return $S_{\text{BaCa}}(G)$
\end{algorithm}
\DecMargin{1.3em}

The overall optimization process of our \ourmethod~is shown in Algorithm~\ref{alg:baca}.

\section{Theorem Proofs} \label{ap:thm}
In this section, we provide the detailed proof of Theorem 1. We first introduce the necessary preliminaries, and then present the complete proof in Section~\ref{ap:proof-thm1}.

\subsection{Preliminaries} \label{ap:proof-pre}

\begin{definition}[{Graphon}]
A graphon is a symmetric, two-dimensional, and continuous measurable function $W:\Omega^2\rightarrow [0,1]$, where $\Omega$ is a measurable space, \eg $\Omega=[0,1]$. Here symmetric means $W(x,y)=W(y,x)$ for all $x,y\in \Omega$. 
\end{definition}
Given a certain generation rule of graphs, graphon is considered to continuously add nodes according to this rule until infinity, and finally obtain a probability density function. It describes the probability that an edge exists between two nodes. Specifically, given two nodes $v_i, v_j \in [0,1]$, $W(i,j)$ describes the probability that an edge exists between these nodes. 
Indeed, to measure the distance between graphons, it's essential to introduce a suitable distance function.
We first begin with the cut norm \cite{lovasz2012large}.

\begin{definition}[{Cut Norm}]
The cut norm of graphon $W$ is defined as:
\begin{equation}
\|W\|_{\square} = \sup_{S, T\subseteq \Omega} \left| \int_{S \times T} W(x, y) \, dxdy \right|,
\end{equation}
where the supremum is taken over all subsets $S, T \subseteq [0,1]$.
\label{def:cut-norm}
\end{definition}

\begin{definition}[{Homomorphism density}]
Given a graph $F = (\tilde{\mathcal{V}}, \tilde{\mathcal{E}})$ and graphon $W$, the homomorphism density of $F$ in $W$ is:
\begin{equation}
t(F, W) = \int_{[0,1]^{|\tilde{\mathcal{V}}|}} \prod_{(i, j) \in \tilde{\mathcal{E}}} W(x_i, x_j) \prod_{i \in \tilde{\mathcal{V}}} dx_i.
\end{equation}
\end{definition}
\noindent We now present a standard result bounding the change in homomorphism density under graphon perturbation.

\begin{lemma}[{Counting Lemma}~\cite{lovasz2012large}]\label{lem:graphon_diff}
Let $F$ be a simple graph and $W, W'$ be graphons. Then
\begin{equation}
|t(F, W) - t(F, W')| \leq \mathrm{e}(F) \cdot \| W - W' \|_{\square},
\end{equation}
where $\mathrm{e}(F)$ is the number of edges in $F$.
\end{lemma}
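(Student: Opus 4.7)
The plan is to reduce the theorem to a direct application of the Counting Lemma combined with two elementary facts about the cut norm: absolute homogeneity and the triangle inequality. The critical algebraic identity is that the displacement of the mixed graphon from each endpoint equals the endpoint-to-endpoint gap times the complementary weight, namely $W_s - W_G = (1-\lambda)(W_H - W_G)$ and symmetrically $W_s - W_H = \lambda(W_G - W_H)$. Once this is observed, both inequalities in the theorem follow almost immediately.

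First I would verify that $W_s$ is itself a valid graphon: as a convex combination of two symmetric $[0,1]$-valued measurable functions on $\Omega^2$, $W_s$ inherits symmetry, measurability, and range $[0,1]$, so the Counting Lemma legitimately applies to pairs involving $W_s$. Next I would compute $\|W_s - W_G\|_{\square}$ by pulling the nonnegative scalar $(1-\lambda)$ out of the supremum in the definition of the cut norm, yielding $\|W_s - W_G\|_{\square} = (1-\lambda)\,\delta_{GH}$, and symmetrically $\|W_s - W_H\|_{\square} = \lambda\,\delta_{GH}$. Finally I would invoke the Counting Lemma with $F = T_G$ and graphons $(W_s, W_G)$ to obtain $|t(T_G, W_s) - t(T_G, W_G)| \leq \mathrm{e}(T_G)\,(1-\lambda)\,\delta_{GH}$, and the symmetric inequality for $T_H$, recovering the claimed bounds (with the edge-count factor either absorbed into the stated constant or made explicit in the final write-up).

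The main technical content lies entirely in the Counting Lemma, which I invoke as a black box; the hard part is therefore really just recognizing that the theorem is a corollary of standard graphon calculus once the convex structure of $W_s$ is exploited. One subtlety worth flagging is that the discriminative typology $T_G$ must be a finite simple graph with a well-defined homomorphism density for the lemma to apply, which is consistent with its use as a structural fingerprint. The interpretive punchline, which I would emphasize in a remark following the proof, is that the deviation of homomorphism densities scales linearly in the mixing weight, so graphs sampled from $W_s$ remain structurally close to whichever endpoint carries the larger weight, precisely the intuition that underlies the boundary-aware mixup strategy.
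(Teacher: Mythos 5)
Your proposal does not prove the statement it was assigned. The statement here is the Counting Lemma itself, and your argument invokes that very lemma ``as a black box'' in order to derive the paper's Theorem~1 (the mixup deviation bound). As a derivation of Theorem~1 your reasoning is sound and in fact coincides with what the paper does in its appendix: observe $W_s - W_G = (1-\lambda)(W_H - W_G)$, use absolute homogeneity of the cut norm, and apply the Counting Lemma with $F = T_G$ (and symmetrically for $T_H$). But with respect to the assigned statement the proposal is circular: it assumes exactly what is to be shown and supplies no argument for the inequality $|t(F,W) - t(F,W')| \leq \mathrm{e}(F)\,\|W - W'\|_{\square}$. The paper itself does not prove this lemma either; it cites it from Lov\'asz's monograph, so there is no internal proof to compare against --- but that does not make your write-up a proof of the lemma.

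If you want to actually establish the lemma, the standard argument is a telescoping sum over the edges of $F$. Order the edges $e_1,\dots,e_m$ with $m = \mathrm{e}(F)$, and for $k=0,\dots,m$ let $I_k$ denote the integrand in which the factors for $e_1,\dots,e_k$ use $W'$ and the remaining factors use $W$; then $t(F,W)-t(F,W')$ is the sum of the $m$ consecutive differences $\int (I_{k-1}-I_k)$. Each such difference isolates a single factor $W(x_i,x_j)-W'(x_i,x_j)$ for the edge $e_k=(i,j)$, multiplied by the product of the other edge factors. Fixing all variables except $x_i$ and $x_j$, that product splits as $a\,f(x_i)\,g(x_j)$ with $a,f,g$ taking values in $[0,1]$, and the functional characterization of the cut norm, $\sup_{0\le f,g\le 1}\bigl|\int (W-W')(x,y)f(x)g(y)\,dx\,dy\bigr| = \|W-W'\|_{\square}$ (the supremum over $[0,1]$-valued functions is attained at indicator functions), bounds each term by $\|W-W'\|_{\square}$. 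Summing over the $m$ edges yields the factor $\mathrm{e}(F)$. Note that this factor is not cosmetic: it is precisely what the paper later has to ``absorb'' into the constant in Theorem~1, so a proof that never exhibits where $\mathrm{e}(F)$ comes from is missing the one quantitative feature of the lemma that matters downstream.
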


\subsection{Proof of Theorem 1} \label{ap:proof-thm1}
To begin with, we revisit the theorem as follows.

\begin{theorem}
Let $W_G$ and $W_H$ be graphons estimated from two subgroups $G$ and $H$ of the same distribution type (\ie, both ID or both OOD). Let the interpolated graphon be defined as $W_s = \lambda W_G + (1 - \lambda) W_H$, where $\lambda \in [0,1]$. Then, for any discriminative typology $T_G$ and $T_H$:
\begin{equation}
\begin{aligned}
    \left| t(T_G, W_s) - t(T_G, W_G) \right| &\leq (1 - \lambda) \cdot \delta_{GH}, \\
    \left| t(T_H, W_s) - t(T_H, W_H) \right| &\leq \lambda \cdot \delta_{GH},
\end{aligned}
\end{equation}
where $\delta_{GH} = \| W_G - W_H \|_{\square}$ is the cut norm distance between $W_G$ and $W_H$.
\end{theorem}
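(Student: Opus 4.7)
The plan is to exploit the linearity of the interpolated graphon $W_s$ together with the absolute homogeneity of the cut norm, and then invoke the Counting Lemma (Lemma~1) to transfer the cut-norm control into a homomorphism-density bound. First I would expand
\begin{equation*}
W_s - W_G \;=\; \lambda W_G + (1-\lambda) W_H - W_G \;=\; (1-\lambda)(W_H - W_G),
\end{equation*}
and analogously $W_s - W_H = \lambda(W_G - W_H)$. Since the cut norm is absolutely homogeneous (a direct consequence of Definition~2, by pulling the scalar out of the double integral and the supremum), this immediately gives $\|W_s - W_G\|_{\square} = (1-\lambda)\,\delta_{GH}$ and $\|W_s - W_H\|_{\square} = \lambda\,\delta_{GH}$.

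Next I would apply the Counting Lemma to the pair $(W_s, W_G)$ with the typology $T_G$, and symmetrically to $(W_s, W_H)$ with $T_H$. This yields
\begin{equation*}
\bigl|t(T_G, W_s) - t(T_G, W_G)\bigr| \;\leq\; \mathrm{e}(T_G)\,\|W_s - W_G\|_{\square} \;=\; \mathrm{e}(T_G)(1-\lambda)\,\delta_{GH},
\end{equation*}
and the analogous inequality with $T_H$, $W_H$, and $\lambda$. Combining the two displays recovers the asserted bounds. The fact that $T_G$ and $T_H$ are drawn from the same distribution type as the source graphons plays no role in the analytic argument itself; it is only relevant for the semantic interpretation in the Remark, namely that the typologies one is preserving are genuinely characteristic of the distribution.

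The only real subtlety I anticipate is the absence of the edge-count factor $\mathrm{e}(T_G)$ in the statement of the theorem: the Counting Lemma naturally produces this prefactor, but the theorem bounds the deviation directly by $(1-\lambda)\,\delta_{GH}$. I would resolve this either by absorbing $\mathrm{e}(T_G)$ and $\mathrm{e}(T_H)$ into a redefined dissimilarity $\tilde{\delta}_{GH} := \max\{\mathrm{e}(T_G), \mathrm{e}(T_H)\}\cdot\|W_G - W_H\|_{\square}$, or by normalizing the homomorphism density so that the Counting Lemma holds without the edge-count prefactor. Either convention leaves the substantive content of the proof unchanged; beyond this bookkeeping, the argument reduces to the two-line cut-norm computation above together with a direct application of Lemma~1.
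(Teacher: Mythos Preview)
Your proposal is correct and follows essentially the same approach as the paper: both compute $W_s - W_G = (1-\lambda)(W_H - W_G)$ (and symmetrically for $W_H$), pull the scalar through the cut norm, and then invoke the Counting Lemma to obtain the bound with the $\mathrm{e}(T_G)$ prefactor. The paper handles the edge-count discrepancy exactly as you anticipate, stating that one may absorb $\mathrm{e}(T_G)$ and $\mathrm{e}(T_H)$ into the definition of $T$ or restrict to constant-size motifs, so your bookkeeping remark is on point.
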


\begin{proof}
Let $W_G$ and $W_H$ be the graphons estimated from two subgroups $G$ and $H$ belonging to the same distribution type (ID or OOD). Let the mixed graphon be defined as:
\begin{equation}
W_s = \lambda W_G + (1 - \lambda) W_H, \quad \lambda \in [0,1].
\end{equation}

We aim to bound the deviation in homomorphism density of a discriminative typology $T_G$ (from group $G$) under interpolation.
Applying Lemma~\ref{lem:graphon_diff} with $F = T_G$, $W = W_s$, and $W' = W_G$, we have:
\begin{equation}
\begin{aligned}
&\left| t(T_G, W_s) - t(T_G, W_G) \right| \\
= &\left| t\left(T_G, \lambda W_G + (1 - \lambda) W_H \right) - t(T_G, W_G) \right| \\
\leq &\mathrm{e}(T_G) \cdot \left\| (1 - \lambda)(W_H - W_G) \right\|_{\square} \\
= &(1 - \lambda) \cdot \mathrm{e}(T_G) \cdot \| W_H - W_G \|_{\square}.
\end{aligned}
\end{equation}

Similarly, for $T_H$ from group $H$, we have:
\begin{equation}
\begin{aligned}
&\left| t(T_H, W_s) - t(T_H, W_H) \right| \\
= &\left| t\left(T_H, \lambda W_G + (1 - \lambda) W_H \right) - t(T_H, W_H) \right| \\
\leq &\lambda \cdot \mathrm{e}(T_H) \cdot \| W_H - W_G \|_{\square}.
\end{aligned}
\end{equation}

By absorbing $\mathrm{e}(T_G)$ and $\mathrm{e}(T_H)$ into the definition of $T$ or considering constant-size motifs, the result simplifies to:
\begin{equation}
\begin{aligned}
\left| t(T_G, W_s) - t(T_G, W_G) \right| &\leq (1 - \lambda) \cdot \delta_{GH}, \\
\left| t(T_H, W_s) - t(T_H, W_H) \right| &\leq \lambda \cdot \delta_{GH},
\end{aligned}
\end{equation}
where $\delta_{GH} = \| W_H - W_G \|_{\square}$.

\end{proof}


\section{Experiment}
\subsection{Dataset Description} \label{app:data}

For OOD detection, we employ 10 pairs of datasets from two mainstream graph data benchmarks (i.e., TUDataset~\cite{tu_Morris2020} and OGB~\cite{ogb_hu2020open}) following GOOD-D~\cite{liu2023good}. Specifically, we select 8 pairs of molecular datasets, 1 pair of protein datasets, and 1 pair of social network datasets.
$90\%$ of ID samples are used for training, and $10\%$ of ID samples and the same number of OOD samples are integrated together for testing. 
The partitioning of ID samples for training, along with the division of ID and OOD samples for testing, follows GOOD-D~\cite{liu2023good}. 
Further detailed information about these datasets is categorized and described as follows.

\subsubsection{Molecular Datasets}

\begin{itemize}[leftmargin=1.5em]
    \item \textbf{BZR}~\cite{tu_Morris2020} is a dataset focused on benzodiazepine receptor ligands, containing molecular structures and associated binding affinities. It is crucial for drug design and discovery, specifically for studying receptor-ligand interactions.

    \item \textbf{PTC-MR}~\cite{tu_Morris2020} reports the carcinogenicity of 344 chemical compounds in male and female rats and includes 19 discrete labels. It is utilized for predicting the carcinogenic potential of chemical substances.

    \item \textbf{AIDS}~\cite{tu_Morris2020} contains data on anti-HIV compounds, including their molecular structures and biological activities, serving as a valuable resource for the development of anti-HIV drugs.

    \item \textbf{ENZYMES}~\cite{tu_Morris2020} is a dataset consisting of protein structures classified into enzyme types based on their functionality. It is used for protein function prediction and enzyme classification.

    \item \textbf{COX2}~\cite{tu_Morris2020} comprises data on cyclooxygenase-2 inhibitors, which are compounds with anti-inflammatory properties. This dataset is essential for research and development of anti-inflammatory drugs.

    \item \textbf{MUTAG}~\cite{tu_Morris2020} has seven kinds of graphs derived from 188 mutagenic aromatic and heteroaromatic nitro compounds. It is used for studying the mutagenicity of chemical substances.

    \item \textbf{DHFR}~\cite{tu_Morris2020} includes dihydrofolate reductase inhibitors, important in the development of antibacterial and anticancer drugs, aiding in drug discovery and medicinal chemistry research.

    \item \textbf{PROTEINS}~\cite{tu_Morris2020} contains data on protein structures and their functionalities. Nodes represent secondary structure elements (SSEs), and edges connect neighboring elements in the amino acid sequence or 3D space. This dataset is used for protein structure prediction and functional analysis.

    \item \textbf{Tox21}~\cite{ogb_hu2020open} is a dataset containing toxicity data on 12 biological targets, which has been used in the 2014 Tox21 Data Challenge and includes nuclear receptors and stress response pathways.

    \item \textbf{BBBP}~\cite{ogb_hu2020open, martins2012bayesian} includes records of whether a compound has the permeability property of penetrating the blood-brain barrier, essential for the design of central nervous system drugs.

    \item \textbf{ClinTox}~\cite{ogb_hu2020open, novick2013sweetlead, gayvert2016data} contains clinical toxicity data on a variety of drug compounds, classifying drugs approved by the FDA and those that have failed clinical trials for toxicity reasons.

    \item \textbf{ToxCast}~\cite{ogb_hu2020open, richard2016toxcast} includes high-throughput screening data on the toxicity of chemical substances, with measurements based on over 600 in vitro screenings. This dataset is used for large-scale toxicity assessment and environmental health research.

    \item \textbf{SIDER}~\cite{ogb_hu2020open, kuhn2016sider} contains information on drug side effects, grouped into 27 system organ classes, also known as the Side Effect Resource. It is utilized for predicting drug side effects and improving drug safety profiles.

    \item \textbf{BACE}~\cite{ogb_hu2020open, subramanian2016computational} includes qualitative binding results for a set of inhibitors of human $\beta$-secretase 1, which are potential treatments for Alzheimer's disease. This dataset is used in Alzheimer's disease research and drug development.

    \item \textbf{FreeSolv}~\cite{ogb_hu2020open} includes data on the hydration free energy of small molecules, used for molecular dynamics simulations and solubility studies.

    \item \textbf{Esol}~\cite{ogb_hu2020open} contains data on the aqueous solubility of compounds, used for studying compound solubility and drug design.

    \item \textbf{LIPO}~\cite{ogb_hu2020open} includes data on the lipophilicity of chemical compounds. It is used for studying the partitioning of compounds between water and oil phases, which is important in drug design.

    \item \textbf{MUV}~\cite{ogb_hu2020open, gardiner2011effectiveness} includes data on the activity of compounds from virtual screening, designed for validation of virtual screening techniques.

    \item \textbf{HIV}~\cite{ogb_hu2020open} contains experimentally measured abilities to inhibit HIV replication.
\end{itemize}

\subsubsection{Protein Datasets}

\begin{itemize}[leftmargin=1.5em]
    \item \textbf{PROTEINS}~\cite{tu_Morris2020} contains data on protein structures and their functionalities. Nodes represent secondary structure elements (SSEs), and edges connect neighboring elements in the amino acid sequence or 3D space. This dataset is used for protein structure prediction and functional analysis.

    \item \textbf{ENZYMES}~\cite{tu_Morris2020} is a dataset consisting of protein structures classified into enzyme types based on their functionality. It is used for protein function prediction and enzyme classification.
\end{itemize}

\subsubsection{Social Network Datasets}

\begin{itemize}[leftmargin=1.5em]
    \item \textbf{IMDB-BINARY}~\cite{tu_Morris2020} (abbreviated as IMDB-B) is derived from the collaboration of a movie set. Each graph consists of actors or actresses, with edges representing their cooperation in a movie. The label corresponds to movie's genre. This dataset is used for movie classification and recommendation system studies.

    \item \textbf{IMDB-MULTI}~\cite{tu_Morris2020} (abbreviated as IMDB-M) consists of graphs derived from movie collaborations which is similar to IMDB-BINARY, but with multi-class labels. It is utilized in multi-class classification tasks in social network analysis.
\end{itemize}

\subsection{Baselines} \label{app:baseline}
We compare \ourmethod with a wide range of graph OOD detection baselines, grouped into the following categories:

\begin{itemize}[leftmargin=1.5em]
    \item \textbf{Graph Kernel Based Methods.}
    These methods first extract representations using hand-crafted kernels and then apply OOD detectors. We adopt Weisfeiler-Lehman (WL)~\cite{wlgk_shervashidze2011weisfeiler} and propagation kernel (PK)~\cite{pk_neumann2016propagation}, followed by local outlier factor (LOF)~\cite{lof_breunig2000lof}, one-class SVM (OCSVM)~\cite{ocsvm_manevitz2001one}, and isolation forest (iF)~\cite{iforest_liu2008isolation}.

    \item \textbf{Anomaly Detection Methods.}
    These methods jointly optimize the encoder and detection objective in a fully integrated framework. We include OCGIN~\cite{ocgin_zhao2021using}, which trains a GIN encoder via an SVDD objective; GLocalKD~\cite{glocalkd_ma2022deep}, which performs local-global distillation; and GOOD-D~\cite{liu2023good} as a contrastive learning-based end-to-end OOD detector. We also include HGOE~\cite{junwei2024hgoe}, which synthesizes internal and external outliers through outlier exposure, but requires auxiliary OOD data during training.

    \item \textbf{Self-supervised Training Methods.}
    These methods utilize self-supervised learning to obtain graph-level embeddings, then apply separate OOD detectors. We consider InfoGraph~\cite{infograph_sun2020infograph} and GraphCL~\cite{graphcl_you2020graph} as representation learners, and use iF~\cite{iforest_liu2008isolation} and Mahalanobis distance (MD)~\cite{ssd_sehwag2020ssd} for detection. We also include GOOD-D~\cite{liu2023good}, a strong baseline that integrates graph contrastive learning and perturbation-free augmentation.
    HGOE~\cite{junwei2024hgoe} introduces a hybrid outlier exposure framework by synthesizing both internal and external graph outliers but requires access to auxiliary OOD data during training. 

    \item \textbf{Test-time and Data-centric Methods.}
    These methods perform OOD detection during inference without modifying the pretrained GNN. We include AAGOD~\cite{guo2023data}, which adopts contrastive learning and graph augmentations to enhance OOD sensitivity with only ID data, and GOODAT~\cite{wang2024goodat}, which partitions test graphs and trains a graph masker online without tuning the backbone encoder. Both serve as strong test-time baselines under practical constraints.
\end{itemize}

\subsection{Pre-trained Models}
We adopt the InfoNCE loss as the pretraining objective $\mathcal{L}_{\text{Pre}}$. Specifically, we follow GOOD-D~\cite{liu2023good} and use a 5-layer GIN~\cite{gin_xu2019how} as the encoder backbone. All models are pre-trained only on ID training data without any access to OOD samples or auxiliary supervision.
Our proposed \ourmethod is instantiated on top of the well-trained encoder, and improves OOD detection performance in a fully post-hoc and test-time setting without auxiliary data.

\begin{table*}[t]
\centering

\resizebox{1\textwidth}{!}{
\begin{tabular}{cc | cccccccccc}
\toprule
\multirow{2}{*}{\textit{ID Dict.}} & \multirow{2}{*}{\textit{OOD Dict.}} & BZR & PTC-MR & AIDS & ENZYMES & IMDB-M & Tox21 & FreeSolv & BBBP & ClinTox & Esol \\
&   & COX2 & MUTAG & DHFR & PROTEIN & IMDB-B & SIDER & ToxCast & BACE & LIPO & MUV \\
\midrule
\xmark & \xmark & 92.95±0.15& 77.59±4.37 & 99.24±0.06 & 63.14±0.00 &75.14±1.81 & 65.07±1.32 & 77.62±1.31 & 86.51±0.74 & 76.92±1.47 & 77.62±1.31 \\

\xmark & \cmark & 93.22±0.12 & \underline{85.71±1.88} & \underline{99.80±0.04} & 65.51±2.39 & 78.50±1.34 & 67.05±1.35 & 74.84±0.24 & 89.64±0.51 & 78.60±0.39 & 92.51±0.38 \\

\cmark & \xmark  & \underline{93.66±0.03} & 84.65±2.45 & 99.50±0.01 & \underline{66.30±2.43} & \underline{79.83±1.23} & \underline{68.29±1.54} & \underline{75.60±0.31} & \underline{91.49±0.41} & \underline{80.02±0.88} & \underline{94.54±0.03} \\

\midrule
\cmark & \cmark
& \textbf{94.23±0.42} & \textbf{86.53±1.39} & \textbf{99.86±0.03} & \textbf{67.10±1.43} & \textbf{80.93±0.69}& \textbf{69.82±0.59} & \textbf{83.12±0.42} & \textbf{93.11±0.29} & \textbf{82.57±0.23} & \textbf{95.31±0.14} \\
\bottomrule

\end{tabular}}
\caption{Ablation study results of \ourmethod and its variants in terms of AUC ($\%$, mean $\pm$ std). The best and runner-up results are highlighted with \textbf{bold} and \underline{underline}, respectively.} 
\label{tab:ablation}
\vspace{-0.3cm}
\end{table*}

\begin{figure*}[t]
  \centering
  \begin{minipage}[b]{0.48\textwidth}
    \centering
\includegraphics[width=\linewidth]{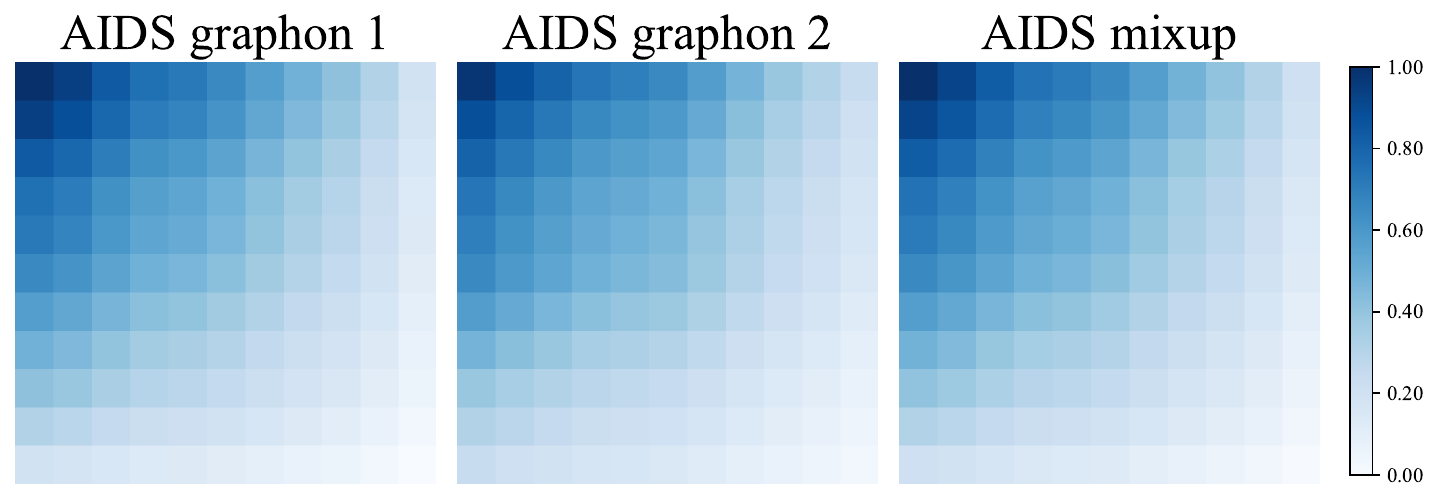} \\
\includegraphics[width=\linewidth]{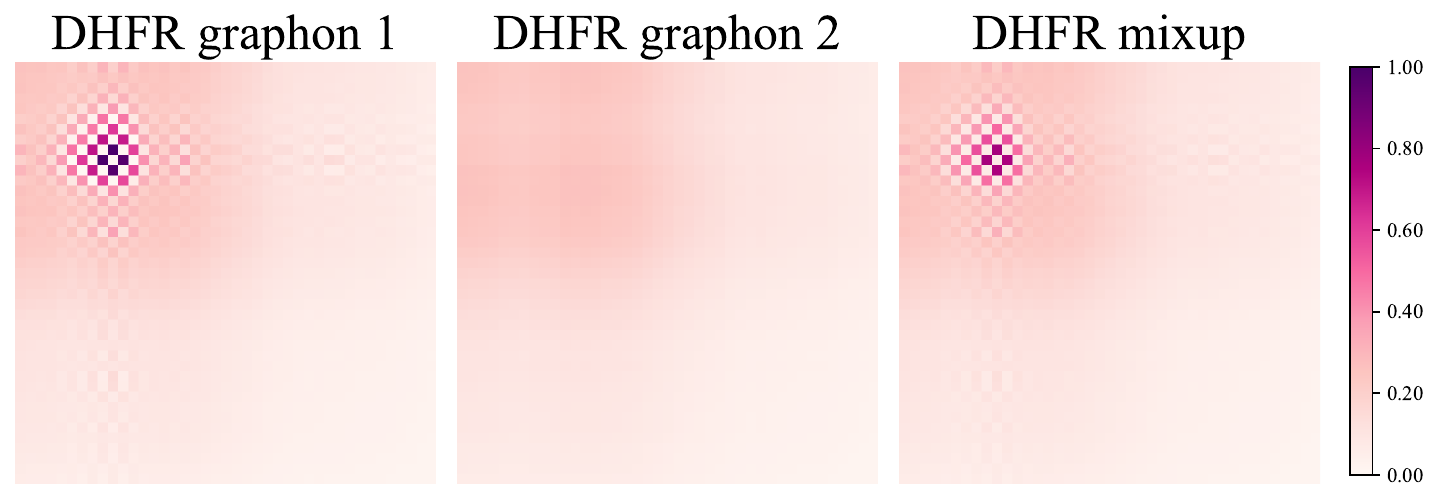}
\caption{Estimated graphons and their mixup results on the AIDS/DHFR.}
\label{fig:on-1}
  \end{minipage}
  \hfill
  \begin{minipage}[b]{0.48\textwidth}
    \centering
\includegraphics[width=\linewidth]{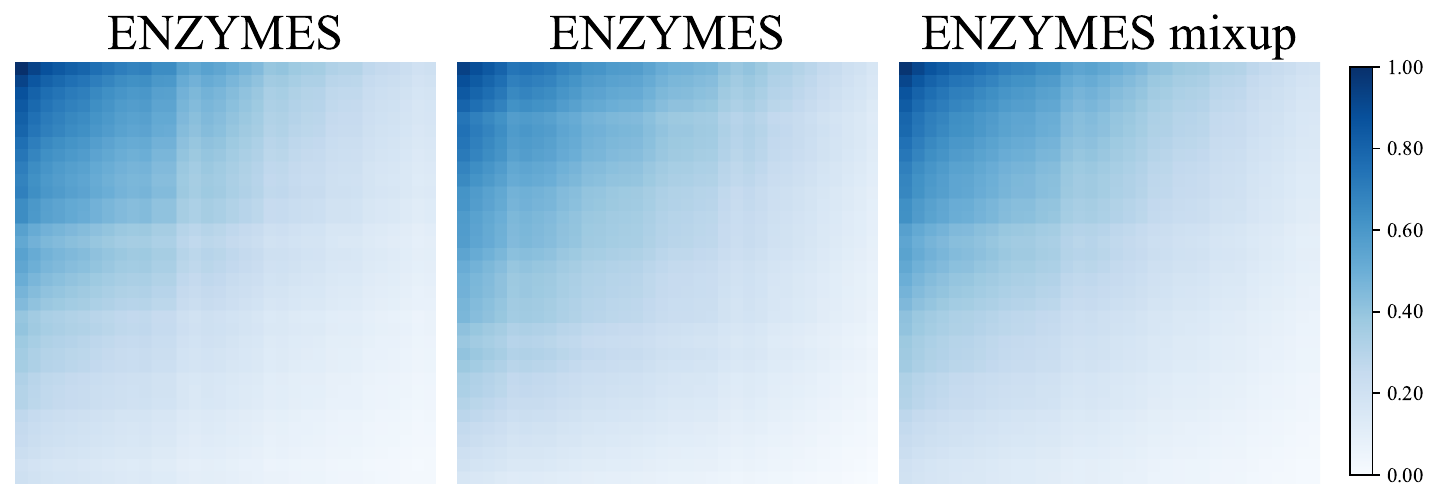} \\
\includegraphics[width=\linewidth]{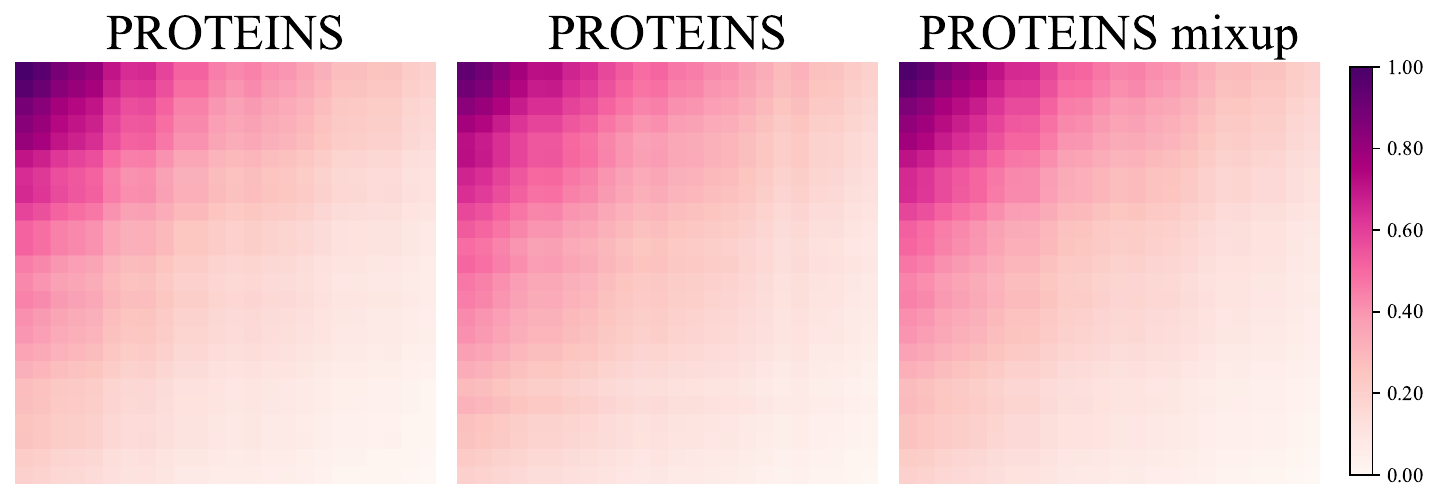}
\caption{Estimated graphons and their mixup results on the ENZYMES/PROTEIN.}
  \end{minipage}
\vspace{-0.3cm}
\end{figure*}

\subsection{Implementation}
We evaluate \ourmethod with a popular OOD detection metric, i.e., area under receiver operating characteristic Curve (AUC). Higher AUC values indicate better performance.  
The reported results are the mean performance with standard deviation after 5 runs.
We perform grid search to select the key hyper-parameters of \ourmethod.
We fixed $\mathbb{K}=5$ for all our experiments.
During ID and OOD graphon mixup, \(\lambda\) was randomly chosen from the range \([0.01,1.0]\). 

For the experiment environment, we use the following software framework: Python 3.7, Pytorch 1.8, CUDA 11.0, and Pytorch-Geometric 2.0.1. The hardware setup includes an Intel(R) Xeon(R) Gold 6240 CPU @ 2.60GHz, 256GB RAM, and a Tesla V100 PCIe 32GB GPU.

\begin{figure}[ht]
 \centering
 \hspace{-0.18cm}
 \subfigure[IMDB-M]{
   \includegraphics[width=0.23\linewidth]{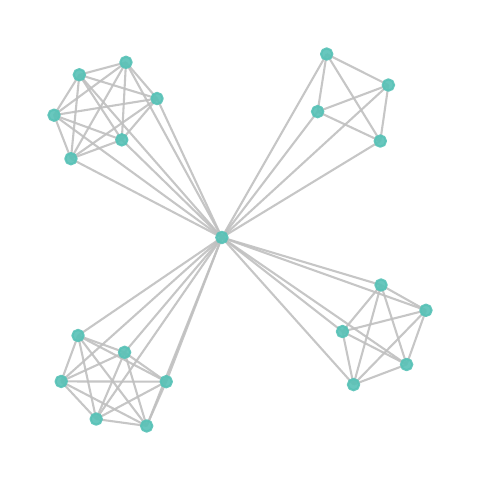}
   \includegraphics[width=0.23\linewidth]{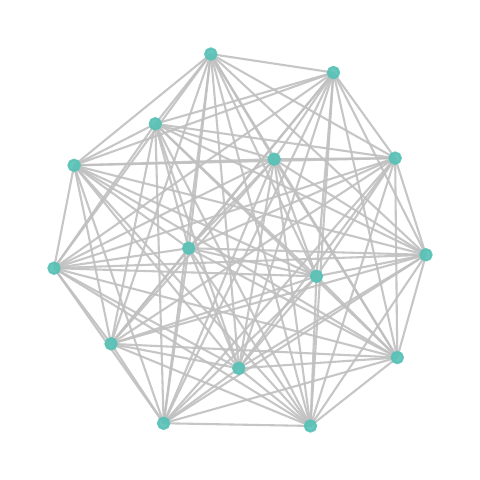}
   \label{subfig:1}}
 \hspace{-0.18cm}
 \subfigure[IMDB-B]{
   \includegraphics[width=0.23\linewidth]{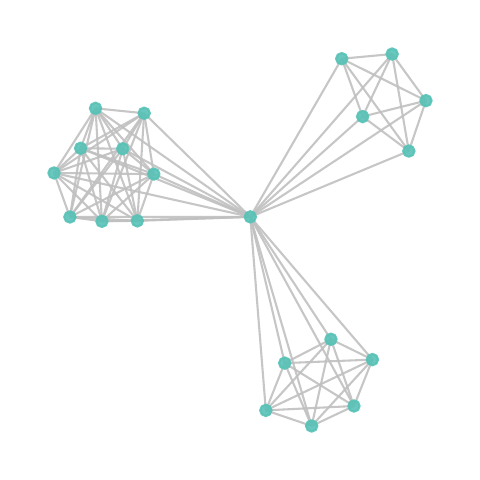}
   \includegraphics[width=0.23\linewidth]{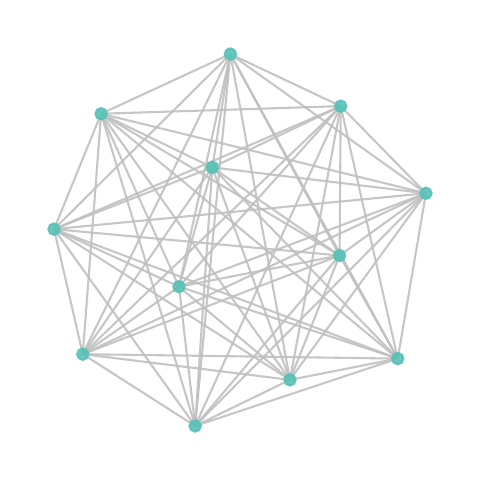}
   \label{subfig:2}} 
   
 \vspace{-0.2cm}
  \caption{Visualization on IMDB-M and IMDB-B.}
  \vspace{-0.3cm}
 \label{fig:case}
\end{figure}

\begin{figure*}[t]
  \centering
  \begin{minipage}[b]{0.48\textwidth}
    \centering
\includegraphics[width=\linewidth]{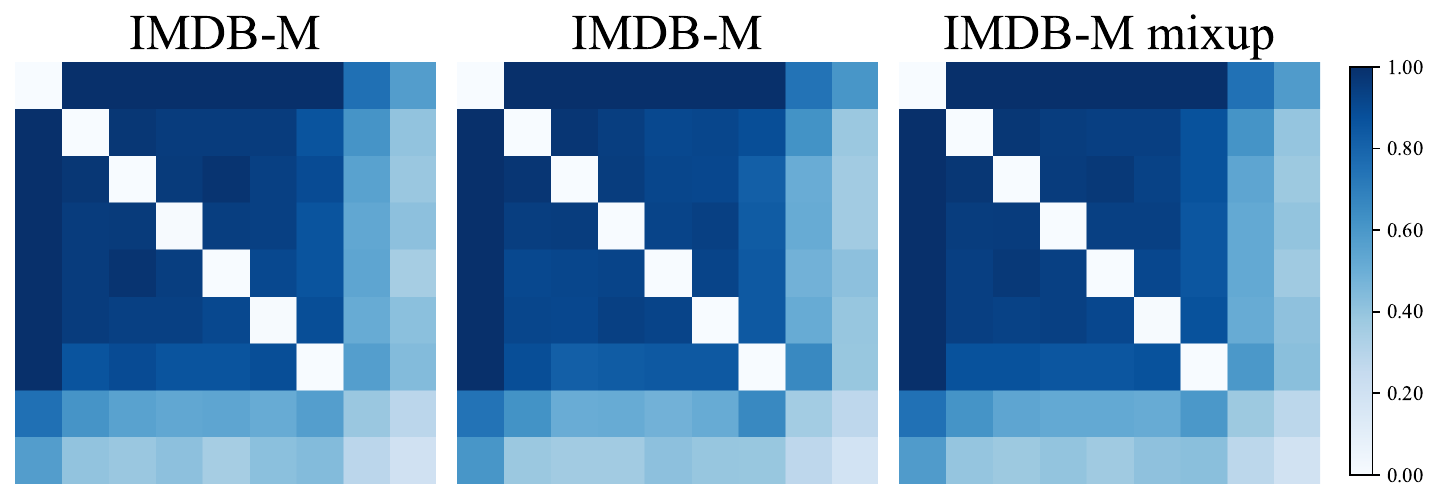} \\
\includegraphics[width=\linewidth]{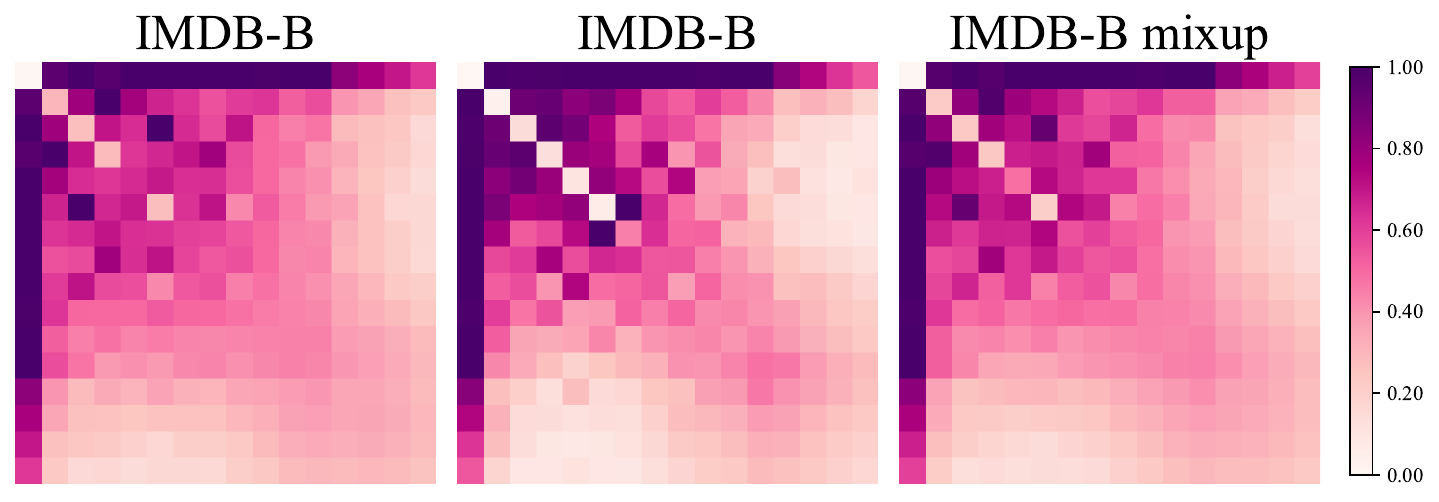}
\caption{Estimated graphons and their mixup results on the IMDB-M/IMDB-B.}
  \end{minipage}
  \hfill
  \begin{minipage}[b]{0.48\textwidth}
    \centering
\includegraphics[width=\linewidth]{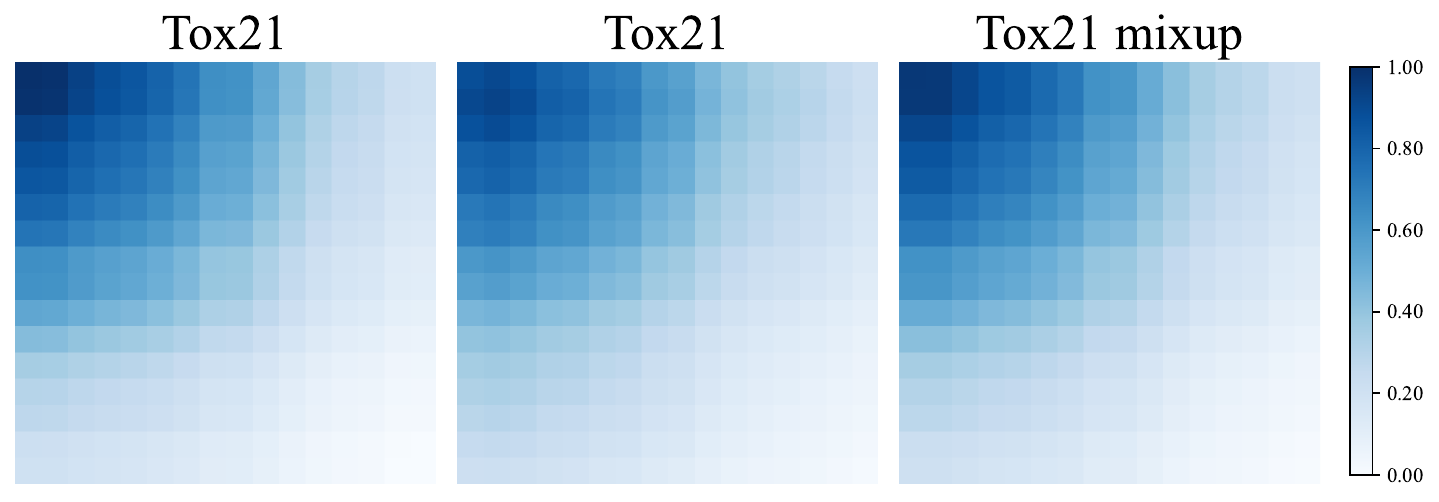} \\
\includegraphics[width=\linewidth]{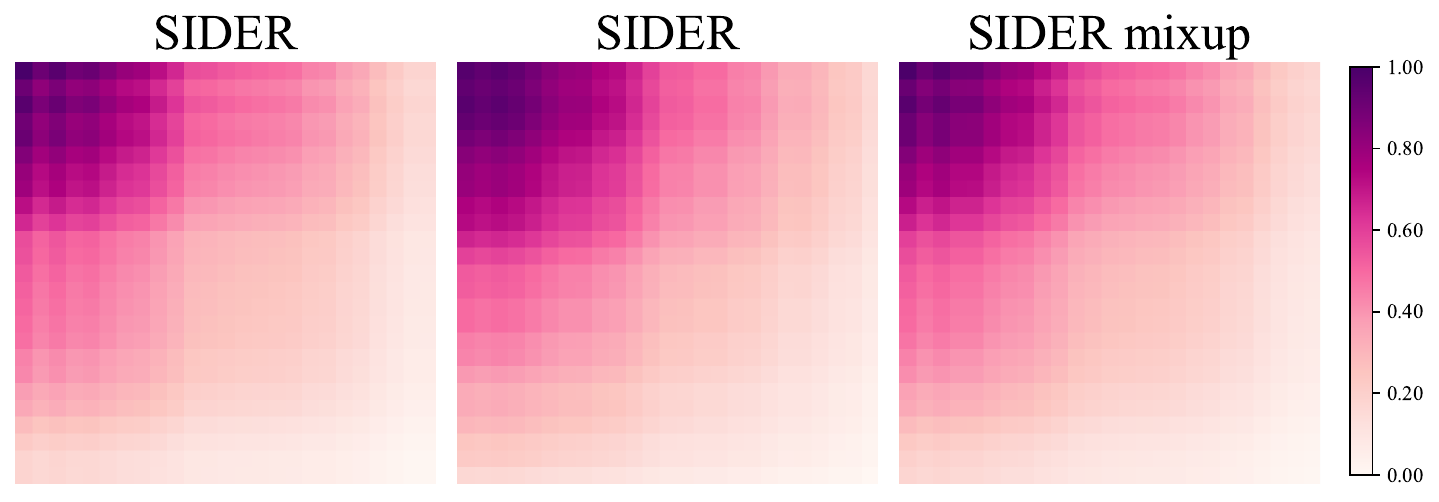}
\caption{Estimated graphons and their mixup results on the Tox21/SIDER.}
  \end{minipage}
\end{figure*}

\subsection{Case Study on IMDB-B/IMDB-M datasets}    \label{app:case}

To further elucidate the phenomenon of \ourmethod's suboptimal results on test graphs from social networks, we provide visualizations in Figure~\ref{fig:case} on IMDB-B and IMDB-M datasets. The two datasets are distinguished solely by their ground-truth labels, binary versus multi-class classification, while both originate from the same data source and thus exhibit similar structural information (\eg, star-shaped and mesh-like structures). 
Consequently, their inherent semantic information within the structure is also similar.  
Especially for the test-time setting, when only the given test samples are available, the performance of constructing ID and OOD dictionaries based on structurally similar samples is limited, making it challenging to differentiate by capturing discriminative topology.
A potential solution is to introduce test-time learnable components or feature-aware mixup, which is a promising direction for future work.

\subsection{Additional Results of Ablation Study}

We perform ablation studies by selectively removing the ID dictionary and OOD dictionary (denoted as \textit{ID Dict.} and \textit{OOD Dict.}, respectively) in Table~\ref{tab:ablation}.
We first observe that \ourmethod with both dictionaries (last row) consistently achieves the best performance across all dataset pairs, highlighting the effectiveness of our dual-dictionary design. The first row corresponds to removing both dictionaries, which reduces the model to the pretrained baseline without score calibration. Notably, using only one of the dictionaries (either ID or OOD) leads to a clear drop in performance, indicating that both are necessary to enable boundary-aware score calibration.

\subsection{Additional Graphon mixup Visualization}    \label{app:add-mix}
After partitioning the test data into ID and OOD subsets, we estimate graphons within each group and perform graphon mixup. 
In this section, we provide additional visualizations of the graphon mixup results on multiple dataset pairs, visualized as heatmaps from Figure~\ref{fig:on-1} to Figure~\ref{fig:on-3}.
The first row corresponds to the ID distribution, and the second row to the OOD distribution. In each row, the first two columns show the original estimated graphons, and the third column shows the mixed graphon computed from the first two.
We can observe clear structural differences between graphons from different distributions. In contrast, mixup within the same distribution preserves key topology while generating new graphons, effectively enhancing the diversity of discriminative typologies.

\begin{figure*}[t]
  \centering
  \begin{minipage}[b]{0.48\textwidth}
    \centering
\includegraphics[width=\linewidth]{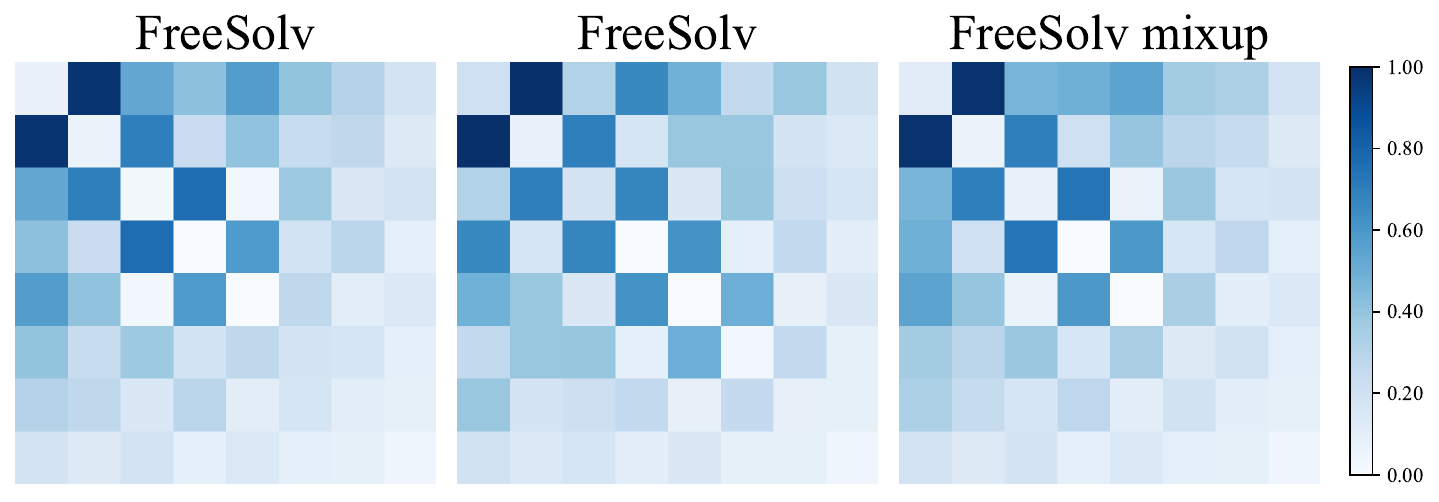} \\
\includegraphics[width=\linewidth]{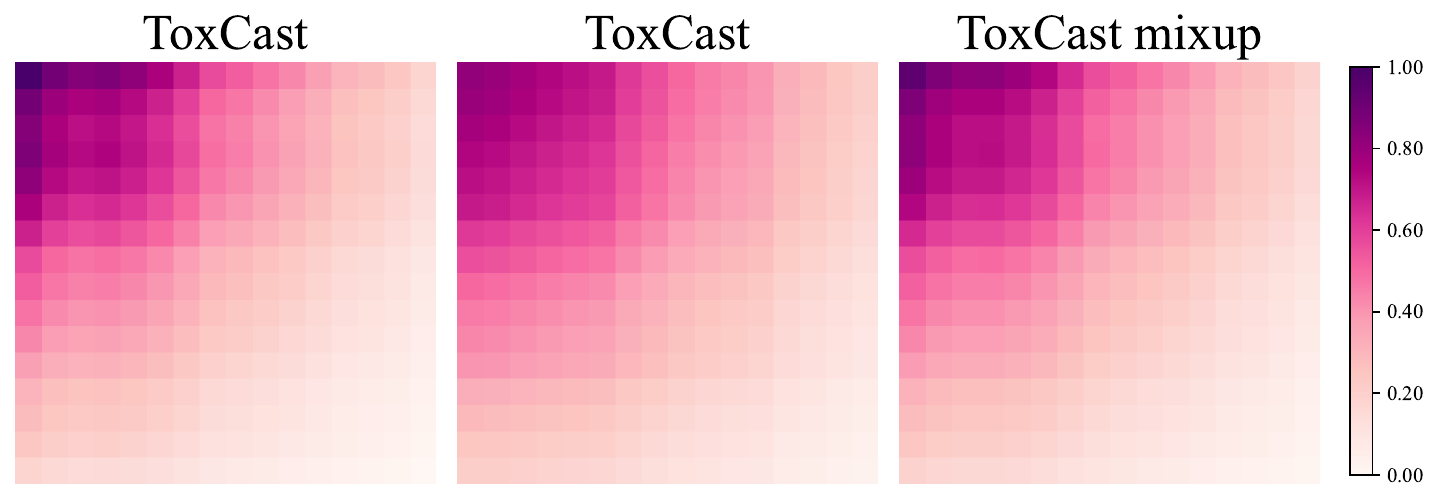}
\caption{Estimated graphons and their mixup results on the FreeSolv/ToxCast.}
  \end{minipage}
  \hfill
  \begin{minipage}[b]{0.48\textwidth}
    \centering
\includegraphics[width=\linewidth]{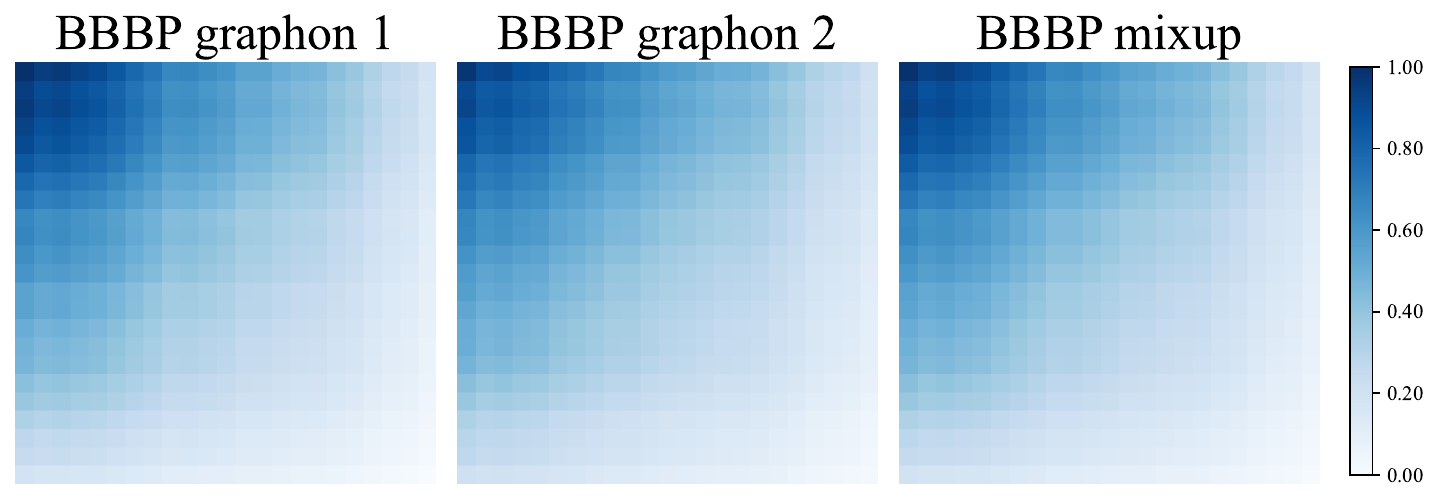} \\
\includegraphics[width=\linewidth]{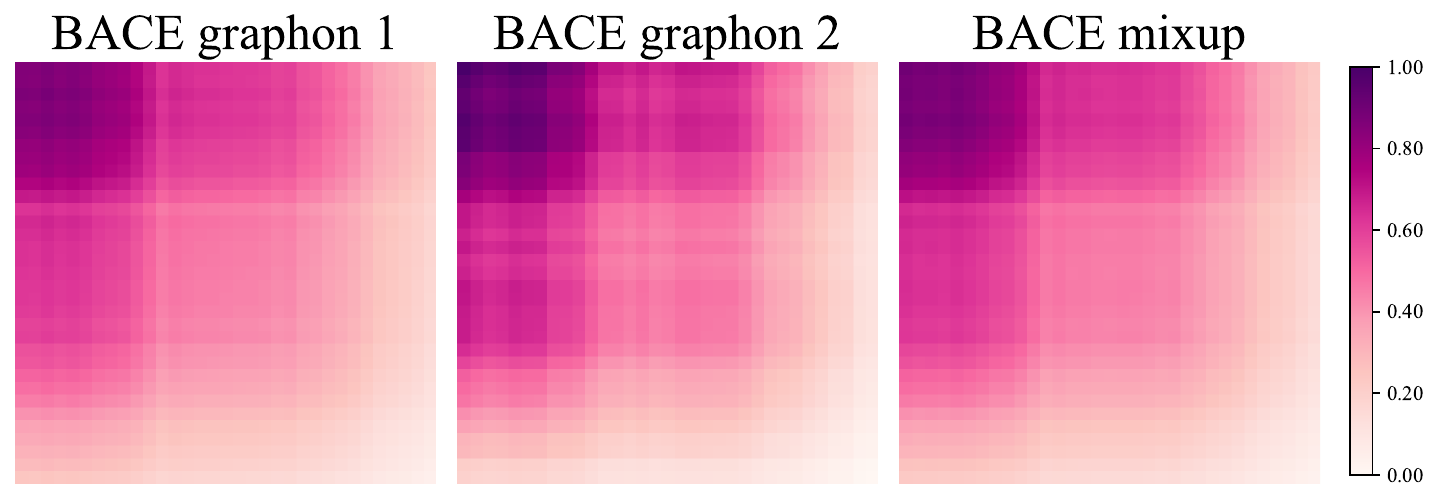}
\caption{Estimated graphons and their mixup results on the BBBP/BACE.}
  \end{minipage}
\end{figure*}

\begin{figure*}[t]
  \centering
  \begin{minipage}[b]{0.48\textwidth}
    \centering
\includegraphics[width=\linewidth]{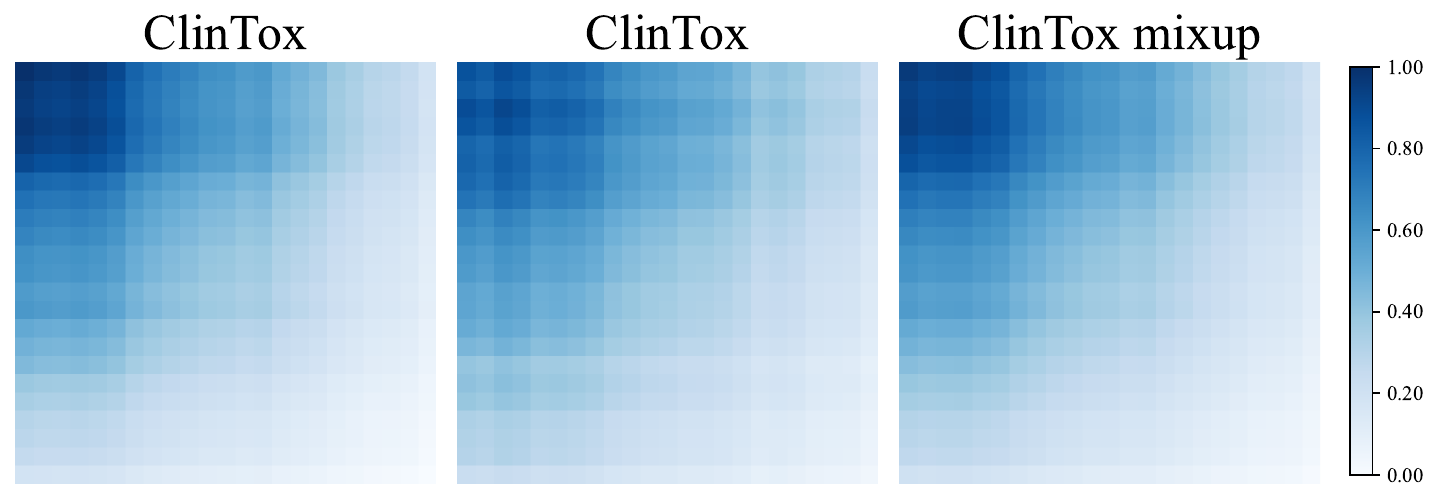} \\
\includegraphics[width=\linewidth]{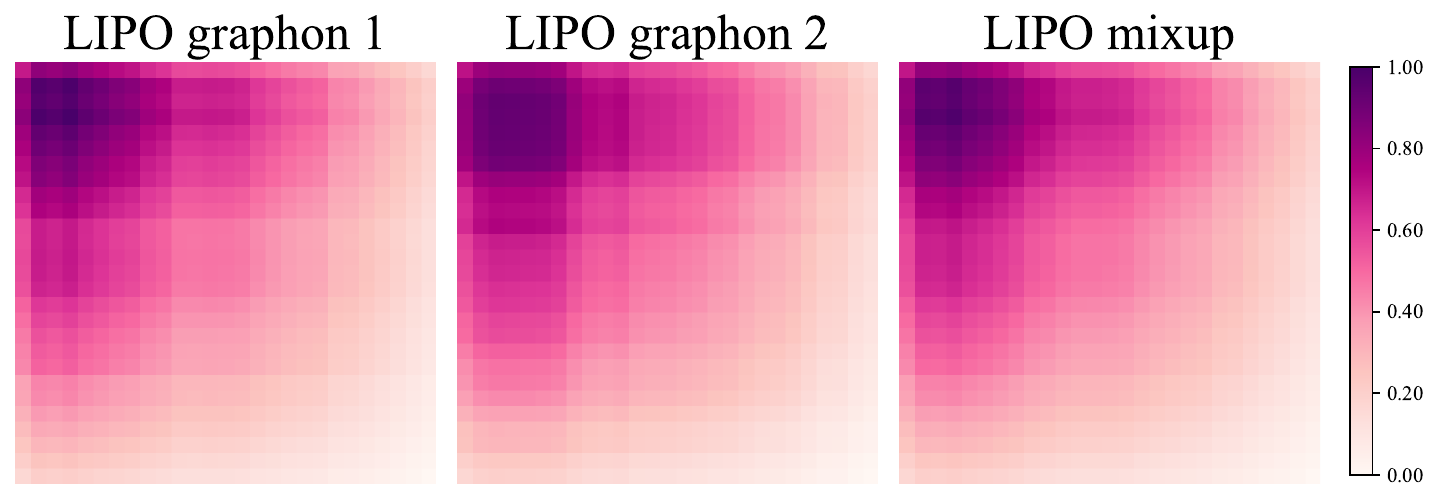}
\caption{Estimated graphons and their mixup results on the ClinTox/LIPO.}
  \end{minipage}
  \hfill
  \begin{minipage}[b]{0.48\textwidth}
    \centering
\includegraphics[width=\linewidth]{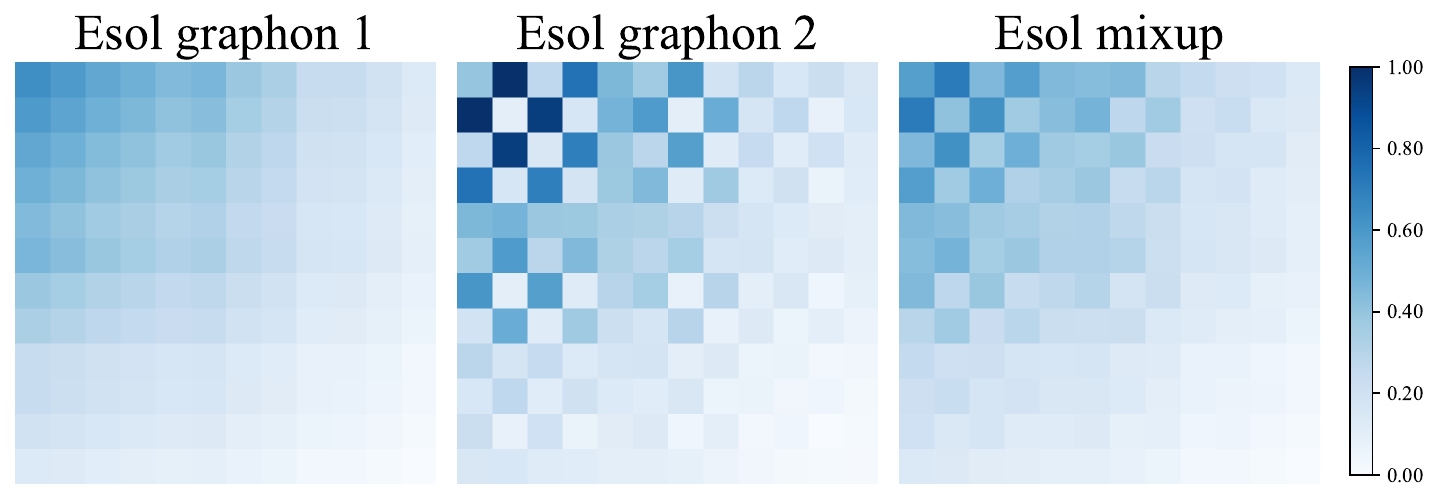} \\
\includegraphics[width=\linewidth]{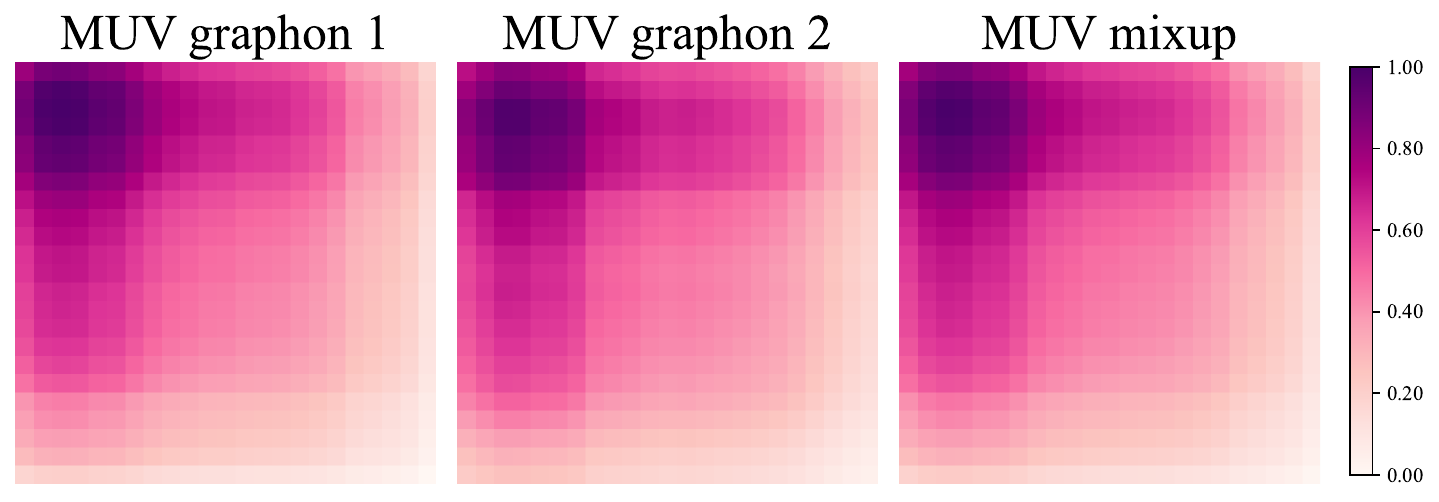}
\caption{Estimated graphons and their mixup results on the Esol/MUV.}
\label{fig:on-3}
  \end{minipage}
\end{figure*}

\end{document}